\newtheorem{thm}{Theorem}
\newtheorem{ass}{Assumption}
\newtheorem{lem}{Lemma}
\begin{document}

\title{Soft Label PU Learning}

\author{\name Puning Zhao \email pnzhao@zhejianglab.com \\
       \addr Zhejiang Lab\\
       Hangzhou, China
       \AND
       \name Jintao Deng \email jintaodeng@tencent.com \\
       \addr Tencent\\
       Shenzhen, China
       \AND
       \name Xu Cheng \email 
       chengx19@mails.tsinghua.edu.cn\\
       \addr Tsinghua University\\
       Beijing, China}

\editor{My editor}

\maketitle

\begin{abstract}
PU learning refers to the classification problem in which only part of positive samples are labeled. Existing PU learning methods treat unlabeled samples equally. However, in many real tasks, from common sense or domain knowledge, some unlabeled samples are more likely to be positive than others. In this paper, we propose \emph{soft label PU learning}, in which unlabeled data are assigned soft labels according to their probabilities of being positive. Considering that the ground truth of $TPR$, $FPR$, and $AUC$ are unknown, we then design PU counterparts of these metrics to evaluate the performances of soft label PU learning methods within validation data. We show that these new designed PU metrics are good substitutes for the real metrics. After that, a method that optimizes such metrics is proposed. Experiments on public datasets and real datasets for anti-cheat services from Tencent games demonstrate the effectiveness of our proposed method.
\end{abstract}

\begin{keywords}
PU learning, soft label
\end{keywords}

\section{Introduction}

Positive Unlabeled (PU) learning is the problem of learning a classifier based only on positive and unlabeled training data. Unlabeled samples can be either positive or negative, which is unknown to the learner. This setting is common in many scenarios~\citep{elkan2008learning}. For example, in medical applications, we would like to train a classifier to predict whether a person has a certain disease. Patients who have been diagnosed to have such disease can be regarded as positive samples, while other people are unlabeled.

Significant progress have been made on PU learning \citep{bekker2020learning}. Existing methods treat all unlabeled samples equally, which assumes that no prior knowledge about the positive probability of each unlabeled sample is given before model training. However, in practice, it is possible to know from common sense or domain knowledge that some of them are more likely to be positive, while others are less likely. For example, in medical applications, some people have not been diagnosed, but already have some related symptoms. Despite that they should still be regarded as unlabeled samples, the chances of being positive are higher than other people. Traditional PU learning methods regard them as ordinary unlabeled samples, which definitely results in loss of such information.  

In this paper, we propose \emph{soft label PU learning}, which incorporates the prior information of unlabeled samples into our model, to improve the classification accuracy. In particular, our method assigns each unlabeled sample with a soft label between $0$ and $1$. If a sample has a high soft label, then it is more likely to be positive. We then analyze how to use these soft labels to generate an accurate classifier.
To be more precise, we mainly analyze three problems. The first problem is the design of evaluation metrics or some objectives that can tell us the direction of improvement. Traditional supervised learning problems with fully labeled samples can be evaluated by real metrics, including $TPR$, $FPR$, and $AUC$, etc. However, since the real label is unknown, we can not estimate them directly from validation data. Hence, we propose some substitute metrics, namely $T P R_{S P U}, F P R_{S P U}$ and $A U C_{S P U}$. These substitute metrics are all designed in accordance with those real metrics. Secondly, we would like to discuss how $T P R_{S P U}$, $F P R_{S P U}$ and $A U C_{S P U}$ are related to the real underlying $T P R, F P R, A U C$. This is important because it tells us whether the direction for improvement guided by these PU substitute metrics is correct. In other words, if we adjust our learning algorithm and observe some improvement in our PU substitute metrics, we would like to know whether the real metrics also improve. Finally, we would like to discuss methods to optimize such PU substitute metrics.

Compared with traditional PU learning methods, soft label PU learning is especially useful if the Selected Completely At Random (SCAR) assumption is not satisfied, i.e., the labeling mechanism is uneven~\citep{elkan2008learning}. In this case, if we still use traditional PU learning methods, the classifiers can only identify samples similar to those observed positive ones. Other samples exist that are actually positive, but the labeling mechanism makes them hard to identify. However, if we have prior knowledge, it is possible to generate some reasonable soft labels that are positively related to the probability of a sample being positive. A new classifier can then be trained with these soft labels. As a result, these hidden positive samples will have a much better chance of being discovered by
classifiers. This scenario happens in many real cases, including medical diagnosis, ads recommendation and the cheater recognition task in Tencent games, which will be discussed in detail in this paper.

The remainder of this paper is organized as follows. We will briefly review previous works on PU learning in section 2. Section 3 shows problem statements and the PU substitute evaluation metrics. Then, in section 4, we analyze the relationship of the PU metrics to real metrics. Based on the evaluation metrics, we propose methods that optimize them in section 5. The performances of our method on both public datasets and the anti-cheat task of Tencent games are shown in section 6. Finally, we discuss section 7 and conclude our work in section 8. Finally, the details of the generation of soft labels in the cheater recognition task of Tencent games and the proof of our theoretical results are all shown in the appendix.

\section{Related Work}
To cope with the PU learning problem, researchers have proposed a lot of interesting methods under various assumptions~\citep{bekker2020learning}.
A simple and common assumption is Selected Completely at Random (SCAR), which means that each positive sample in the dataset is labeled as positive with equal probabilities. This assumption also implies that the distribution of labeled positive samples is the same as that of all positive samples, including those latent positive samples in the unlabeled dataset. Under SCAR, a lot of interesting methods have been proposed. One simple approach is just training the classifier with unlabeled data being regarded as negative, and then adjust the threshold of prediction scores~\citep{elkan2008learning}. Another idea is to adjust the loss function, such that the empirical risk calculated by PU data is a good estimate of the real risk. This type of methods includes uPU ~\citep{du2014analysis} , nnPU ~\citep{kiryo2017positive}. Recently, several new approaches have been designed, such as generative adversarial method ~\citep{hou2017generative}, rank pruning~\citep{northcutt2017learning}, variational approach~\citep{chen2020variational}, and self-PU ~\citep{chen2020self}. As long as SCAR assumption is satisfied, and the ratio of positive samples among all samples is known, the classifiers constructed by these above methods usually show satisfactory performance. If the class prior is unknown, then there are several methods to estimate it ~\citep{du2014class, christoffel2016class, bekker2018estimating}.

However, in many real scenarios, the labeling mechanism is far from being completely random, which makes the PU learning problem significantly harder. There are several methods to solve the PU learning problems using some more realistic assumptions to replace the SCAR assumption, such as , in which a new assumption called Probabilistic Gap PU (PGPU) ~\citep{he2018instance} was proposed. The classifier is then learned based on the estimated probability gap. Another related work is ~\citep{bekker2019beyond}, which assumes that only part of features are related to the labeling probability. In this case, an expectation-maximization approach can be used to train a classifier iteratively.

\section{Problem Statement}
Suppose that we have $N$ samples $\left(X_{i}, Y_{i}, S_{i}\right), i=1, \ldots, N$. These $N$ samples are identical and independently distributed (i.i.d), according to a common distribution $\mathbb{P}_{X, Y, S} . X_{i} \in \mathbb{R}^{d}$ denotes the feature, $Y_{i} \in\{0,1\}$ denotes the true label, which is unknown to us. $S_{i} \in[0,1]$ is the soft label, which is calculated from prior knowledge. If a sample $i$ has $S_{i}=0$, then it is an ordinary unlabeled sample. If $S_{i}=1$, then it is a positive example. If $S_{i} \in(0,1)$, then it is still an unlabeled sample, but is more likely to be positive than ordinary unlabeled samples. $S_{i}$ is known from our prior knowledge of the dataset. We make the following basic assumption:

\begin{ass}\label{ass:basic}
\begin{equation}
	\mathbb{E}[S \mid Y=1]>\mathbb{E}[S \mid Y=0].
\end{equation}	
\end{ass}

Assumption 1 holds as long as the assigned soft label $S$ is positively correlated to the real label $Y$. If $S$ can only take values in $\{0,1\}$, then the problem reduces to traditional PU learning problem. Our task is to learn a classifier $\hat{Y}(X) \in\{0,1\}$, to predict the true label $Y$. 

Before designing PU metrics, we provide a brief review of common evaluation metrics for classification, including $TPR$, $FPR$ and $AUC$:
\begin{eqnarray}
	T P R&=&\mathbb{P}(\hat{Y}=1 \mid Y=1),\\
	F P R&=&\mathbb{P}(\hat{Y}=1 \mid Y=0)
\end{eqnarray}
Moreover, many classifiers actually output a score $g(X)$ and the final prediction is made by checking whether the score is larger than a certain threshold, i.e.

\begin{equation}
	\hat{Y}=I(g(X)>T), \nonumber
\end{equation}

in which $T$ is the threshold, $I$ is indicator function, and $g_{s}$ is the raw classifier that outputs scores. $TPR$ and $FPR$ changes with $T$. To evaluate the overall performance of a classifier over all possible thresholds, define
\begin{equation}
	A U C=\int_{0}^{1} T P R d(F P R).
\end{equation}

In traditional machine learning problems, these metrics can be measured using validation data, which is taken randomly
from training dataset. However, in the soft label PU learning problem, $Y_{i}$ is unknown. Therefore, we need to design a substitute of the metrics, to serve as the objectives to optimize. Here we design such metric by constructing a similar counterpart of $T P R, F P R$ and $A U C$ based only on soft label $S$ and prediction $\hat{Y}$. To begin with, note that

\begin{equation}
		T P R  =\mathbb{P}(\hat{Y}=1 \mid Y=1) =\frac{\mathbb{P}(\hat{Y}=1, Y=1)}{\mathbb{P}(Y=1)}  =\frac{\mathbb{E}[\hat{Y} Y]}{\mathbb{E}[Y]}.
\end{equation}

we can similarly define the substitute $T P R_{S P U}$ to be

\begin{equation}
	T P R_{S P U}:=\frac{\mathbb{E}[S \hat{Y}]}{\mathbb{E}[S]}
	\label{eq:tprdef}
\end{equation}

Similarly, we can define

\begin{equation}
	F P R_{S P U}:=\frac{\mathbb{E}[(1-S) \hat{Y}]}{\mathbb{E}[1-S]}
	\label{eq:fprdef}
\end{equation}

\eqref{eq:tprdef} and \eqref{eq:fprdef} defined true/false positive rate in the setting of soft label PU learning. Since the distribution is unknown, $TPR_{SPU}$ and $FPR_{SPU}$ can not be calculated exactly. Instead, these two quantities can be estimated using the validation dataset:
\begin{eqnarray}
 \widehat{T P R}_{S P U}&=&\frac{\sum_{i=1}^{N_{V}} S_{i} \hat{Y}_{i}}{\sum_{i=1}^{N_{V}} S_{i}}   \\
\widehat{F P R}_{S P U}&=&\frac{\sum_{i=1}^{N_{V}}\left(1-S_{i}\right) \hat{Y}_{i}}{\sum_{i=1}^{N_{V}}\left(1-S_{i}\right)},  	
\end{eqnarray}

in which $N_{V}$ is the number of validation samples.

Following the definition of $R O C$ and $A U C$ in traditional machine learning, we now define $R O C_{S P U}$ as the curve of $TPR_{SPU}$ versus $FPR_{SPU}$. Correspondingly, define $A U C_{S P U}$ as the area under the $ROC_{SPU}$ curve, i.e.

\begin{equation}
	A U C_{S P U}:=\int T P R_{S P U} d\left(F P R_{S P U}\right) .
\end{equation}

We then discuss the range of $A U C_{S P U}$, which is shown in the following theorem.

\begin{thm}\label{thm:auc}
	Given the distribution of $S$, the value of $A U C_{S P U}$ satisfies
	\begin{equation}
		A U C_{S P U} \leq \frac{1}{2}+\frac{\int_{0}^{1} F_{S}(u)\left(1-F_{S}(u)\right) d u}{2 \int_{0}^{1} F_{S}(u) d u \int_{0}^{1}\left(1-F_{S}(u)\right) d u},
		\label{eq:aucbound}
	\end{equation}
	in which $F_{S}$ is the cumulative distribution function (cdf) of soft label $S$.
\end{thm}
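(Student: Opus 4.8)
The plan is to derive an exact U-statistic style representation of $AUC_{SPU}$ valid for an arbitrary score function $g$, and then to bound it by a quantity depending only on the law of $S$. Write $G=g(X)$ and introduce the two reweighted laws $dP_{+}\propto S\,dP$ and $dP_{-}\propto(1-S)\,dP$. By \eqref{eq:tprdef} and \eqref{eq:fprdef}, for every threshold $T$ we have $TPR_{SPU}(T)=P_{+}(G>T)$ and $FPR_{SPU}(T)=P_{-}(G>T)$, so that $ROC_{SPU}$ is an ordinary ROC curve for the score $G$ under these two measures. The classical identification of the area under an ROC curve with a rank probability then gives, for two i.i.d.\ copies $(G_1,S_1),(G_2,S_2)$,
\begin{equation}
AUC_{SPU}=\frac{\mathbb{E}\!\left[S_1(1-S_2)\,h(G_1,G_2)\right]}{\mathbb{E}[S]\,\mathbb{E}[1-S]},\qquad h(a,b)=I(a>b)+\tfrac12 I(a=b). \nonumber
\end{equation}

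Next I would linearize the comparison kernel. Writing $h(a,b)=\tfrac12\bigl(1+\sigma(a,b)\bigr)$ with $\sigma(a,b)=I(a>b)-I(b>a)$, and using the independence fact $\mathbb{E}[S_1(1-S_2)]=\mathbb{E}[S]\,\mathbb{E}[1-S]$, the representation reduces to
\begin{equation}
AUC_{SPU}=\frac12+\frac{\mathbb{E}\!\left[S_1(1-S_2)\,\sigma(G_1,G_2)\right]}{2\,\mathbb{E}[S]\,\mathbb{E}[1-S]}. \nonumber
\end{equation}
The heart of the argument is to control the cross term without reference to $g$. Exchangeability of the two i.i.d.\ copies combined with the antisymmetry $\sigma(G_2,G_1)=-\sigma(G_1,G_2)$ forces $\mathbb{E}[S_1(1-S_2)\sigma(G_1,G_2)]=\tfrac12\,\mathbb{E}[(S_1-S_2)\sigma(G_1,G_2)]$, because after symmetrizing the $S_1S_2$ contributions cancel. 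Since $|\sigma|\le 1$ pointwise, one has $(S_1-S_2)\sigma(G_1,G_2)\le|S_1-S_2|$, so the cross term is at most $\tfrac12\,\mathbb{E}|S_1-S_2|$. This is precisely the step at which the particular classifier disappears; equality would require $\sigma(G_1,G_2)=\sign(S_1-S_2)$, i.e.\ a score that ranks samples in the order of their soft labels.

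Finally I would convert the three surviving expectations into the cdf integrals of \eqref{eq:aucbound}. For $S\in[0,1]$ the tail formula gives $\mathbb{E}[S]=\int_0^1(1-F_S(u))\,du$ and hence $\mathbb{E}[1-S]=\int_0^1 F_S(u)\,du$, which reproduces the denominator. For the numerator I would apply the layer-cake identity $|S_1-S_2|=\int_0^1|I(S_1>u)-I(S_2>u)|\,du$ and integrate the expectation inside: for each fixed $u$ the two indicators are independent $\mathrm{Bernoulli}(1-F_S(u))$, so their expected absolute difference is $2F_S(u)(1-F_S(u))$, giving $\mathbb{E}|S_1-S_2|=2\int_0^1 F_S(u)(1-F_S(u))\,du$. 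Substituting these identities into the bound from the previous paragraph yields exactly \eqref{eq:aucbound}.

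The step I expect to be the main obstacle is the first one: justifying the rank-probability representation of $AUC_{SPU}$ cleanly when the score distribution has atoms, so that ties are absorbed into the $\tfrac12 I(a=b)$ term rather than left ambiguous. Once the integral $\int TPR_{SPU}\,d(FPR_{SPU})$ is correctly identified with the stated expectation (which requires care about how $ROC_{SPU}$ is traced through the diagonal segments created by tied scores), everything downstream is elementary, consisting only of the symmetrization identity and two standard cdf computations.
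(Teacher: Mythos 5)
Your proof is correct, but it follows a genuinely different route from the paper's. The paper stays inside the curve-integral definition: it uses the affine identity $TPR_{SPU}=\frac{\mathbb{P}(\hat{Y}=1)}{\mathbb{E}[S]}-\frac{\mathbb{E}[1-S]}{\mathbb{E}[S]}FPR_{SPU}$ and integration by parts to reduce $AUC_{SPU}$ to the term $\int_0^1 \mathbb{E}[S\hat{Y}]\,d\mathbb{P}(\hat{Y}=1)$, then bounds $\mathbb{E}[S\hat{Y}]\le\mathbb{E}[S\,I(S>T)]$ (a rearrangement inequality, applied pointwise at each threshold with $T$ chosen so that $\mathbb{P}(S>T)=\mathbb{P}(\hat{Y}=1)$), and finally computes $\int_0^1 \mathbb{E}[S\,I(S>T)]\,d\mathbb{P}(S>T)=\tfrac12-\tfrac12\int_0^1F_S^2(u)\,du$. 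You instead invoke the Mann--Whitney representation of $AUC_{SPU}$ as a pairwise rank probability under the $S$-weighted and $(1-S)$-weighted laws, symmetrize against the antisymmetric kernel $\sigma$ so the bilinear term collapses to $\tfrac12\,\mathbb{E}[(S_1-S_2)\sigma(G_1,G_2)]\le\tfrac12\,\mathbb{E}|S_1-S_2|$, and finish with layer-cake identities; I checked the algebra and your final substitution reproduces \eqref{eq:aucbound} exactly (the two intermediate quantities are consistent, since $\int F_S-\int F_S^2=\int F_S(1-F_S)$). What your route buys: it works for an arbitrary law of $S$ (atoms included) without the continuity assumption and limiting argument the paper invokes, and it isolates a clean equality condition, namely that the score must rank samples in the order of their soft labels. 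What it costs: the whole argument rests on identifying the Stieltjes integral $\int TPR_{SPU}\,d(FPR_{SPU})$ with the rank probability including the $\tfrac12 I(a=b)$ tie term, which, as you note yourself, requires fixing a convention for how the ROC curve traverses jumps when $g(X)$ has atoms; the paper's pointwise rearrangement argument avoids the Mann--Whitney identity entirely, though it quietly makes analogous regularity assumptions (it parameterizes the curve by $\mathbb{P}(\hat{Y}=1)\in[0,1]$ and assumes $S$ continuous). If you adopt the midpoint (trapezoid) convention for $AUC_{SPU}$, your argument is complete and, arguably, more self-contained than the paper's.
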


It can be shown that the right hand side of \eqref{eq:aucbound} is 1 if and only if $\mathbb{P}(S \in(0,1))=0$, which means that with probability 1 , $S$ can only take values in $\{0,1\}$. In this case, the problem reduces to traditional PU learning problem. Otherwise, if $\mathbb{P}(S \in(0,1))>0$, then the right hand side of \eqref{eq:aucbound} is always less than 1 .

\section{Analysis of PU metrics}
In section 3, we have defined metrics for soft PU learning, which can be estimated using validation data. We will then select model and adjust parameters with the goal of maximizing $T P R_{S P U}$ and minimizing $F P R_{S P U}$ simultaneously. Then a natural question occurs: Are these metrics good substitute of the real underlying metric $T P R, F P R$ and $AUC$? In this section, we answer this question by comparing the PU substitute metrics with real metrics. In other words, we would like to check whether improving $T P R_{S P U}$ and $F P R_{S P U}$ implies improvement of $T P R$ and $FPR$. Throughout this section, we define

\begin{equation}
	\pi=\mathbb{P}(Y=1)
\end{equation}

as the probability of a sample to be positive, and
\begin{eqnarray}
	 S_P&=&\mathbb{E}[S \mid Y=1], \\
S_N&=&\mathbb{E}[S \mid Y=0].
\end{eqnarray}

In this section, we will show that even though the prior knowledge is not very exact, and those soft labels are not very carefully designed, as long as Assumption 1 is satisfied, i.e. $S_{P}>S_{N}$, then under various assumptions, we can get a desirable performance by optimizing those PU substitute metrics.

\subsection{Under Generalized SCAR Assumption}
In original PU learning problems, SCAR assumption assumes that positive samples are labeled with equal probability, and thus the distribution of observed positive samples equals the distribution of all positive samples, i.e. $f(x \mid S=1)=f(x \mid Y=1)$, in which $f$ is the probability density function (pdf) of features. Since we now have soft labels for unlabeled data, we can generalize the above assumption and design a Generalized SCAR assumption, which is formulated as following.

\begin{ass}\label{ass:scar}
(Generalized SCAR assumption) $X$ and $S$ are conditional independent given $Y$.	
\end{ass}

Assumption 2 indicates that for any $s$, we have
\begin{eqnarray}
	f(x \mid S=s, Y=1)&=&f(x \mid Y=1),\label{eq:pos}\\
	f(x \mid S=s, Y=0)&=&f(x \mid Y=0).\label{eq:neg}
\end{eqnarray}
Obviously, the generalized SCAR assumption can be reduced to SCAR assumption, in the sense that if $S$ can only take values in $\{0,1\}$, then \eqref{eq:pos} and \eqref{eq:neg} are equivalent to the original SCAR assumption. In this case, we show the following theoretical results.

\begin{thm}\label{thm:linear}
	If the distribution $\mathbb{P}_{X, Y, S}$ satisfies Generalized SCAR assumption (Assumption \ref{ass:scar}), then the following results hold:
	\begin{eqnarray}
	T P R_{S P U} & =&a T P R+b F P R, \label{eq:tprspu}\\
F P R_{S P U} & =&c T P R+d F P R,\label{eq:fprspu} \\
A U C_{S P U} & =&\frac{1}{2}(b+c)+(a d-b c) A U C,\label{eq:aucspu}		
	\end{eqnarray}
in which
\begin{eqnarray}
	a & =&\frac{\pi S_{P}}{\pi S_{P}+(1-\pi) S_{N}},\label{eq:a} \\
b & =&\frac{(1-\pi) S_{N}}{\pi S_{P}+(1-\pi) S_{N}}, \\
c & =&\frac{\pi\left(1-S_{P}\right)}{1-\pi S_{P}-(1-\pi) S_{N}}, \\
d & =&\frac{(1-\pi)\left(1-S_{N}\right)}{1-\pi S_{P}-(1-\pi) S_{N}}.\label{eq:d}	
\end{eqnarray}
Furthermore, as long as $S_{P}>S_{N}$, we have
\begin{equation}
	a d-b c>0.
\end{equation}	
\end{thm}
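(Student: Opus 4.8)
The plan is to exploit the Generalized SCAR assumption to decouple the soft label $S$ from the prediction $\hat{Y}$, conditionally on the true label $Y$. Since $\hat{Y} = I(g(X) > T)$ is a deterministic function of $X$, and Assumption \ref{ass:scar} states that $X$ and $S$ are conditionally independent given $Y$, it follows that $\hat{Y}$ and $S$ are also conditionally independent given $Y$. This single observation is the engine behind all three identities, because it turns every cross-expectation of the form $\mathbb{E}[S\hat{Y}\mid Y=y]$ into a product.

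First I would establish \eqref{eq:tprspu}. Starting from the definition $TPR_{SPU} = \mathbb{E}[S\hat{Y}] / \mathbb{E}[S]$, I condition on $Y$ and apply the conditional independence to factor $\mathbb{E}[S\hat{Y} \mid Y=y] = \mathbb{E}[S\mid Y=y]\,\mathbb{E}[\hat{Y}\mid Y=y]$. Using $\mathbb{E}[\hat{Y}\mid Y=1] = TPR$ and $\mathbb{E}[\hat{Y}\mid Y=0] = FPR$ together with $S_P, S_N, \pi$, this gives $\mathbb{E}[S\hat{Y}] = \pi S_P\, TPR + (1-\pi) S_N\, FPR$ and $\mathbb{E}[S] = \pi S_P + (1-\pi)S_N$, whence \eqref{eq:tprspu} with the stated $a$ and $b$. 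The identical argument applied to $(1-S)\hat{Y}$ and to $\mathbb{E}[1-S] = 1 - \pi S_P - (1-\pi)S_N$ yields \eqref{eq:fprspu} with $c$ and $d$. I would record in passing the two structural identities $a + b = 1$ and $c + d = 1$, which follow immediately from the formulas and are needed below.

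Next I would derive \eqref{eq:aucspu}. Substituting the linear relations into $AUC_{SPU} = \int TPR_{SPU}\, d(FPR_{SPU})$ and writing $d(FPR_{SPU}) = c\, d(TPR) + d\, d(FPR)$, I expand into four integrals. As the threshold sweeps the ROC curve, $(FPR, TPR)$ traverses from $(0,0)$ to $(1,1)$, so $\int TPR\, d(TPR) = \int FPR\, d(FPR) = \tfrac12$ and $\int TPR\, d(FPR) = AUC$; the remaining integral is handled by integration by parts, $\int FPR\, d(TPR) = 1 - AUC$. Collecting terms gives $AUC_{SPU} = \tfrac12(ac + bd) + bc + (ad - bc)AUC$, and then $b = 1-a$, $d = 1-c$ collapse the constant to $\tfrac12(b+c)$, matching \eqref{eq:aucspu}.

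Finally, for $ad - bc > 0$ I would compute the difference over the common denominator $(\pi S_P + (1-\pi)S_N)(1 - \pi S_P - (1-\pi)S_N) = \mathbb{E}[S]\,\mathbb{E}[1-S]$. The numerator factors cleanly as $\pi(1-\pi)[S_P(1-S_N) - S_N(1-S_P)] = \pi(1-\pi)(S_P - S_N)$, so under $0 < \pi < 1$ and $0 < \mathbb{E}[S] < 1$ its sign is exactly that of $S_P - S_N$, giving strict positivity when $S_P > S_N$. The main obstacle I anticipate is not any single step but the bookkeeping in the AUC integral: getting the orientation of the ROC traversal right, correctly applying integration by parts for $\int FPR\, d(TPR)$, and recognizing that the constant $\tfrac12(ac+bd)+bc$ simplifies only through $a+b = c+d = 1$ rather than by brute expansion.
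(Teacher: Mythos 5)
Your proposal is correct and follows essentially the same route as the paper's proof: the conditional independence of $S$ and $\hat{Y}$ given $Y$ (since $\hat{Y}$ is a function of $X$) yields $\mathbb{E}[S\hat{Y}] = \pi S_P\,TPR + (1-\pi)S_N\,FPR$, and the $AUC_{SPU}$ identity is obtained by the same four-integral expansion with $\int TPR\,d(TPR) = \int FPR\,d(FPR) = \tfrac12$, $\int TPR\,d(FPR) = AUC$, and $\int FPR\,d(TPR) = 1 - AUC$, simplified via $a+b = c+d = 1$. Your explicit factorization $ad - bc = \pi(1-\pi)(S_P - S_N)/\bigl(\mathbb{E}[S]\,\mathbb{E}[1-S]\bigr)$ is a welcome detail where the paper merely asserts positivity as obvious.
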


Note that $\pi, S_{P}$ and $S_{N}$ are usually unknown to us. Therefore, even if we have obtained linear relationship between the PU substitute metrics, they can not be used to infer the value of real metrics. However, from (14) to (16), we know that the direction of improvement pointed out by these PU metrics is correct, since if a new algorithm improves $A U C_{S P U}$, then the underlying $A U C$ will also improve. When optimal $R O C_{S P U}$ is reached, optimal $R O C$ is also reached.

The analysis above indicates that under Generalized SCAR assumption, the PU metrics $T P R_{S P U}$, $F P R_{S P U}$ and $A U C_{S P U}$ can substitute the corresponding real metrics well. However, in some cases, the generalized SCAR assumption (Assumption \ref{ass:scar}) is not satisfied. Therefore, we analyze the PU metric under weaker assumptions.

\subsection{Under Monotonic Expected Label Assumption}
Here we use an assumption that is weaker then generalized SCAR assumption.

\begin{ass}\label{ass:mela}
	(Monotonic Expected Label Assumption) There exists a monotonic increasing function $h$, such that
	
	\begin{equation}
		\mathbb{E}[S \mid X]=h(\mathbb{P}(Y=1 \mid X)).
	\end{equation}
\end{ass}

An intuitive understanding of Assumption \ref{ass:mela} is that if the positive conditional probability is high under some feature values, then the expected soft label will also be higher.

It can be shown that Assumption \ref{ass:mela} is weaker than the combination of Assumption \ref{ass:scar} and the basic setting $S_{P}>S_{N}$ :
\begin{eqnarray}
		\mathbb{E}[S \mid X]&= &\mathbb{P}(Y=1 \mid X) \mathbb{E}[Y=1, X] +\mathbb{P}(Y=0 \mid X) \mathbb{E}[S \mid Y=0, X] \nonumber\\
&= & \mathbb{P}(Y=1 \mid X) \mathbb{E}[S \mid Y=1] +P(Y=0 \mid X) \mathbb{E}[S \mid Y=0] \nonumber\\
&= & \left(S_{P}-S_{N}\right) P(Y=1 \mid X)+S_{N},	
\end{eqnarray}

in which the second step holds because Assumption 2 assumes the conditional independence of $S$ and $X$ given $Y$. As long as $S_{P}>S_{N}$, (22) is satisfied with $h(t)=$ $\left(S_{P}-S_{N}\right) t+S_{N}$. This shows that Assumption 3 is weaker. Actually, comparing with Assumption 2, Assumption 3 is more realistic since it is usually hard to design soft labels to ensure that the conditional expectation of soft label grows linearly with $\mathbb{P}(Y=1 \mid X)$.

Under this assumption, we can show the following results:
\begin{thm}\label{thm:mela}
Under Assumption \ref{ass:mela}, if a classifier $g_{s}^{*}: \mathbb{R}^{d} \rightarrow$ $\mathbb{R}$ attains optimal $R O C_{S P U}$, then $g_{s}^{*}$ attains optimal $R O C$.	
\end{thm}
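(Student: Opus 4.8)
The plan is to recast both the true ROC and $ROC_{SPU}$ as the ROC curves of two \emph{weighted} binary problems that share the same scoring geometry, and then to invoke the Neyman--Pearson characterization of optimal ROC curves. Write $\eta(X) := \mathbb{P}(Y=1\mid X) = \mathbb{E}[Y\mid X]$ and $m(X) := \mathbb{E}[S\mid X]$. Since any thresholded classifier $\hat Y = I(g_s(X) > T)$ is a function of $X$ alone, the tower property gives $\mathbb{E}[S\hat Y] = \mathbb{E}[m(X)\hat Y]$ and $\mathbb{E}[S] = \mathbb{E}[m(X)]$, so that the definitions \eqref{eq:tprdef}--\eqref{eq:fprdef} become $TPR_{SPU} = \mathbb{E}[m(X)\hat Y]/\mathbb{E}[m(X)]$ and $FPR_{SPU} = \mathbb{E}[(1-m(X))\hat Y]/\mathbb{E}[1-m(X)]$. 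The same manipulation applied to the real metrics gives $TPR = \mathbb{E}[\eta(X)\hat Y]/\mathbb{E}[\eta(X)]$ and $FPR = \mathbb{E}[(1-\eta(X))\hat Y]/\mathbb{E}[1-\eta(X)]$. Thus $(TPR, FPR)$ and $(TPR_{SPU}, FPR_{SPU})$ are the true/false positive rates of two binary problems whose positive and negative sub-measures are $\eta(x)f(x)\,dx,\,(1-\eta(x))f(x)\,dx$ and $m(x)f(x)\,dx,\,(1-m(x))f(x)\,dx$ respectively, with $f$ the marginal density of $X$.

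Next I would establish the tool that drives everything: for a weighted binary problem with weight $w(\cdot)\in[0,1]$, the optimal ROC is attained precisely by classifiers that threshold a monotone increasing transform of $w(X)$. This is the Neyman--Pearson lemma applied to the two sub-measures: for a fixed weighted false-positive rate, the weighted true-positive rate is maximized by predicting $1$ on the region where the likelihood ratio $\tfrac{w(x)}{1-w(x)}$ is largest, and this ratio is strictly increasing in $w$. Hence every point of the optimal ROC is realized by a super-level set $\{w > s\}$, and a score $g_s$ sweeps out the optimal ROC exactly when the family of its super-level sets $\{g_s > T\}$ coincides (up to null sets) with the family $\{w > s\}$; equivalently, $g_s$ orders $X$ the same way $w$ does almost everywhere. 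Specializing $w = \eta$ characterizes optimal $ROC$, and $w = m$ characterizes optimal $ROC_{SPU}$.

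Finally I would invoke Assumption \ref{ass:mela}. It states $m(X) = h(\eta(X))$ with $h$ monotone increasing, so $m$ and $\eta$ induce \emph{the same} ordering of feature values, and the two level-set families agree: $\{m > s\} = \{\eta > h^{-1}(s)\}$. Consequently a classifier orders $X$ by $m$ if and only if it orders $X$ by $\eta$. By the characterization of the previous step, $g_s^*$ attaining optimal $ROC_{SPU}$ means it orders $X$ by $m$, hence by $\eta$, which is exactly the condition for attaining optimal $ROC$. This closes the argument.

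The main obstacle is making the phrase ``attains optimal $ROC$'' fully rigorous rather than heuristic: one must handle ties (flat regions where $\eta$, and hence $m$, is constant, on which both curves are indifferent), decide whether randomized tests are allowed at the boundary, and argue that optimality of the \emph{whole} curve — not merely a single threshold — forces order-consistency with $w$ almost everywhere. A related delicacy is the monotonicity of $h$: the clean equality of level-set families needs $h$ strictly increasing, since if $h$ were flat on an interval of $\eta$-values then $m$ would be constant there and $ROC_{SPU}$ could not detect how $g_s^*$ splits that region, so a classifier attaining optimal $ROC_{SPU}$ need not be order-consistent with $\eta$. I would therefore read ``monotonic increasing'' in Assumption \ref{ass:mela} as strictly increasing, and would state the Neyman--Pearson characterization as an explicit lemma so that both specializations ($w=\eta$ and $w=m$) follow from one proof.
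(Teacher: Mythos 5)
Your proposal is correct and follows essentially the same route as the paper: your Neyman--Pearson characterization for a weighted binary problem with weight $w$ is precisely the paper's Lemma~\ref{lem} (applied with $w=\eta_S=\mathbb{E}[S\mid X]$ and, ``similarly,'' with $w=\eta$), obtained there by the same tower-property reduction, and your transfer of order-consistency through the monotone $h$ of Assumption~\ref{ass:mela} is exactly the paper's concluding step. Your added caveat that $h$ must be read as \emph{strictly} increasing (otherwise $ROC_{SPU}$ cannot detect how $g_s^*$ splits a region where $\mathbb{E}[S\mid X]$ is flat while $\eta$ varies) is a genuine subtlety that the paper's proof silently assumes.
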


Theorem 3 indicates that even if SCAR assumption is not satisfied, as long as the conditional expectation of observed label grow strictly with the regression function $\eta(x)=\mathbb{E}[Y \mid X]$, then the PU substitute metrics are still reliable since when $R O C_{S P U}$ and $A U C_{S P U}$ are optimal, $R O C$ and $A U C$ are also optimal. The difference with the case under Generalized SCAR assumption is that under this new assumption, an improvement of $A U C_{S P U}$ does not necessarily indicates an improvement of $A U C$.

\subsection{Under Noisy Monotonic Expected Label Assumption}
Assumption 3 is still not satisfied by some cases, in which at the locations with the same $\mathbb{P}(Y=1 \mid x), \mathbb{E}[S \mid X]$ are still different. To cope with these cases, we weaken the assumption further.

\begin{ass}\label{ass:nmela}
(Noisy Monotonic Expected Label Assumption) There exists a monotonic increasing function $h$, such that

(a) $h^{\prime} \geq C_{h}$ for some constant $C_{h}$;

(b) $|E[S \mid X]-h(\mathbb{P}(Y=1 \mid X))| \leq \epsilon$.
\end{ass}

Under Assumption 4, $\mathbb{E}[S \mid X]$ roughly grow with $\mathbb{E}[Y \mid X]$, with some fluctuations allowed. Under this assumption, we can show the following results.
\begin{thm}\label{thm:nmela}
 Under Assumption 4, assume that there exists a constant $M$, such that for any $a, b$ such that $0<a<b<1$,

\begin{equation}
	\int_{\mathbb{E}[Y \mid X] \in[a, b]} f(x) d x \leq M(b-a)
	\label{eq:slice}
\end{equation}

Then (1) If the TPR and FPR of a classifier $\hat{Y}$ is on the ROC curve, then there exists a classifier $\hat{Y}_{S}$ on the $R O C_{S P U}$ curve, such that
\begin{eqnarray}
 T P R(\hat{Y})-T P R\left(\hat{Y}_{S}\right) &\leq& \frac{4 M}{\pi C_{h}^{2}} \epsilon^{2}, \\
 FPR\left(\hat{Y}_{S}\right)-FPR(\hat{Y}) &\leq& \frac{4 M}{\pi C_{h}^{2}} \epsilon^{2} ;	
\end{eqnarray}

(2) If a scoring function $g_{s}^{*}: \mathbb{R}^{d} \rightarrow \mathbb{R}$ attains optimal $R O C_{S P U}$, then

\begin{equation}
	\max _{g} A U C(g)-A U C\left(g_{s}^{*}\right) \leq \frac{8 M}{\pi C_{h}^{2}} \epsilon^{2} .
\end{equation}	
\end{thm}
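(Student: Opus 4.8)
The plan is to reduce the whole statement to a comparison between threshold rules on the regression function $\eta(x)=\mathbb{E}[Y\mid X=x]$ and threshold rules on $\mu(x)=\mathbb{E}[S\mid X=x]$. First I would record the representations $TPR(\hat Y)=\mathbb{E}[\hat Y\eta]/\pi$ and $FPR(\hat Y)=\mathbb{E}[\hat Y(1-\eta)]/(1-\pi)$; using the tower rule together with the fact that any $\hat Y=I(g(X)>T)$ is a function of $X$, the same bookkeeping gives $TPR_{SPU}=\mathbb{E}[\hat Y\mu]/\mathbb{E}[\mu]$ and $FPR_{SPU}=\mathbb{E}[\hat Y(1-\mu)]/\mathbb{E}[1-\mu]$. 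A Neyman--Pearson / Lagrangian argument then shows that a classifier lies on the $ROC$ curve iff it is, up to boundary ties, a threshold rule $I(\eta>t)$, and that it lies on the $ROC_{SPU}$ curve iff it is a threshold rule $I(\mu>s)$. In particular any $g_s^{*}$ attaining optimal $ROC_{SPU}$ is order-equivalent to $\mu$, so the real $ROC$ traced by thresholding $g_s^{*}$ coincides with that traced by $\mu$ and $AUC(g_s^{*})$ is the ordinary AUC of the scorer $\mu$.

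For part (1), fix $\hat Y=I(\eta>t)$ and build $\hat Y_S=I(\mu>s^{*})$, where $s^{*}$ is chosen so that $\mathbb{P}(\mu>s^{*})=\mathbb{P}(\eta>t)$, i.e. the two rules call the positive class on sets of \emph{equal probability mass}. From $|\mu-h(\eta)|\le\epsilon$ and monotonicity of $h$ I would sandwich $\{\mu>h(t)+\epsilon\}\subseteq\{\eta>t\}\subseteq\{\mu>h(t)-\epsilon\}$ to get $|s^{*}-h(t)|\le\epsilon$. Writing $A=\{\eta>t\}$ and $B=\{\mu>s^{*}\}$, the two ingredients $h'\ge C_h$ and $|\mu-h(\eta)|\le\epsilon$ confine the symmetric difference $A\triangle B$ to a thin slice $\{\,|\eta-t|\le O(\epsilon/C_h)\,\}$: a point of $A\setminus B$ has $\eta>t$ but $\mu\le s^{*}$, which forces $h(\eta)\le h(t)+O(\epsilon)$ and hence $\eta-t\le O(\epsilon/C_h)$, and symmetrically on $B\setminus A$.

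The estimate then follows by writing $\pi\,[TPR(\hat Y)-TPR(\hat Y_S)]=\int_{A\setminus B}\eta f-\int_{B\setminus A}\eta f$ and decomposing $\eta=t+(\eta-t)$. The first-order term $t\,[\mathbb{P}(A\setminus B)-\mathbb{P}(B\setminus A)]$ vanishes \emph{exactly} because of the equal-mass choice of $s^{*}$, and this cancellation is the crux of the argument. What remains, $\int_{A\setminus B}(\eta-t)f-\int_{B\setminus A}(\eta-t)f$, is bounded in absolute value by $\sup_{A\triangle B}|\eta-t|\cdot\mathbb{P}(A\triangle B)$; the first factor is $O(\epsilon/C_h)$ by the slice confinement and the second is $O(M\epsilon/C_h)$ by the slice-density hypothesis \eqref{eq:slice}, giving $O(M\epsilon^2/C_h^2)$ and hence the stated $TPR$ bound after dividing by $\pi$ (the explicit constant comes out by tracking the slice half-widths carefully). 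An identical computation with $1-\eta$ in place of $\eta$, where the same equal-mass step kills the first-order term, controls $FPR(\hat Y_S)-FPR(\hat Y)$.

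For part (2), I would deduce the AUC gap from part (1) by a purely geometric integration. Let $TPR_\eta(\cdot)$ and $TPR_\mu(\cdot)$ denote the real TPR of the optimal ($\eta$) and of the $\mu$ ROC curves as functions of $FPR$; the latter is dominated by the former, and part (1) says every point of the optimal curve is within $\Delta:=4M\epsilon^2/(\pi C_h^2)$ of a point of the $\mu$-curve in both the $TPR$ and the $FPR$ coordinate. Monotonicity of ROC curves then yields $TPR_\mu(v+\Delta)\ge TPR_\eta(v)-\Delta$ for every level $v$, so integrating against $d(FPR)$ and shifting the variable by $\Delta$ gives $\max_g AUC(g)-AUC(g_s^{*})\le 2\Delta=8M\epsilon^2/(\pi C_h^2)$, the boundary strip being absorbed via $TPR\le 1$. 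The main obstacle throughout is the first step of part (1): naively pairing $I(\eta>t)$ with $I(\mu>h(t))$ leaves an uncancelled $O(\epsilon)$ term, and it is precisely the equal-mass calibration of the threshold that upgrades the rate to quadratic.
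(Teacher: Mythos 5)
Your proposal is correct and follows essentially the same route as the paper's proof: both calibrate the two threshold rules by equal probability mass (the paper's $T_S$ is your $s^*$), both use exactly that calibration to cancel the first-order threshold term, both confine the symmetric difference $A\triangle A_S$ to an $\eta$-slice of width $O(\epsilon/C_h)$ and invoke the slice-density hypothesis \eqref{eq:slice}, and both combine the $TPR$ and $FPR$ errors into the $AUC$ bound (your shift-and-integrate argument in part (2) actually fills in a step the paper only asserts). One caveat on constants: your sup-times-mass bound as literally written gives $8M\epsilon^2/(\pi C_h^2)$ rather than the stated $4M\epsilon^2/(\pi C_h^2)$, and recovering the factor of $2$ requires either the asymmetric half-width bookkeeping you allude to (the two slices have widths $(\epsilon+\delta)/C_h$ and $(\epsilon-\delta)/C_h$ with $\delta=s^*-h(t)$, and $(\epsilon+\delta)^2+(\epsilon-\delta)^2\le 4\epsilon^2$) or, as the paper does, keeping $h(\eta(x))-\eta_S(x)$ inside the integrals and exploiting the sign of $\eta_S-T_S$ on the two pieces for a second cancellation, so that only $\epsilon$ (not $2\epsilon/C_h$ converted back to the $\eta$-scale) multiplies the slice mass.
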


Figure 1 illustrates Theorem 4. If the conditional expectation of soft label does not monotonicly grow with $\mathbb{P}(Y=1 \mid X)$, the $R O C$ obtained by optimizing the PU metric $R O C_{S P U}$ will no longer be exactly the same as the optimal $R O C$ when true labels are completely known. However, the deviation is only $O\left(\epsilon^{2}\right)$, which is second order small. Therefore, as long as the deviation of such monotonicity is not severe, $T P R_{S P U}, F P R_{S P U}$ and $A U C_{S P U}$ can still work as good metrics.
\begin{figure}[!t] %
	\centering
	\includegraphics[width=0.5\linewidth]{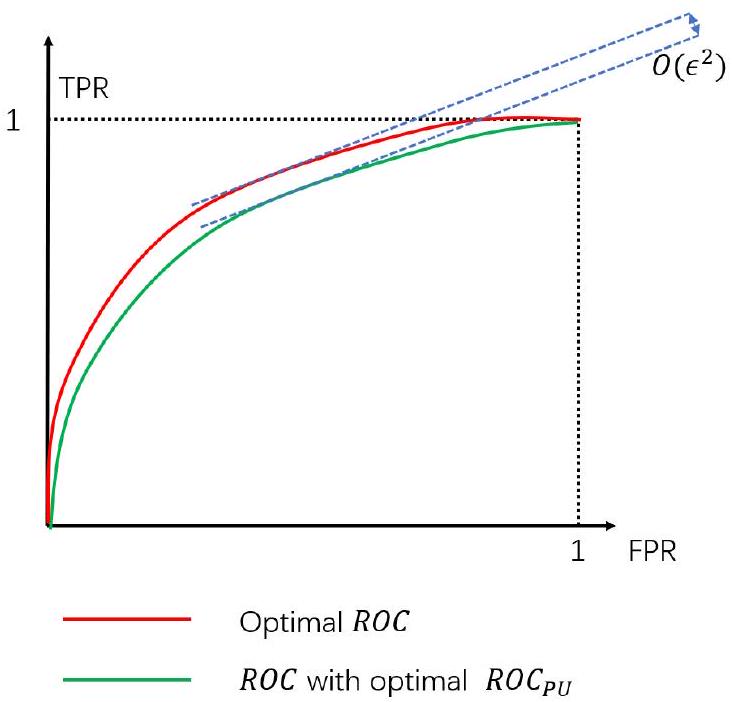} 
	\caption{Illustration of the error of the ROC curve obtained by optimizing PU metrics.}
	\label{fg:1}
\end{figure} 
Figure 1. Comparison with the optimal $R O C$ with known true label (red curve), versus the $R O C$ when $R O C_{S P U}$ is optimal (green curve). The result indicates that the fluctuation of the conditional expectation of soft label will only cause $O\left(\epsilon^{2}\right)$ deviation of $R O C$, in which $\epsilon$ is the bound in the right hand side of (24).

Now we summarize our findings. We have analyzed the performance of $T P R_{S P U}, F P R_{S P U}$ and $A U C_{S P U}$ under three assumptions, in which the first one is the strongest and the last one is the weakest. If Generalized SCAR assumption is satisfied, then these PU metrics can perfectly replace real metrics $T P R, F P R$ and $A U C$, since every improvement on PU metrics implies the improvement on real metrics. If the assumption is weakened, such that we only require $\mathbb{E}[S \mid X]$ to grow monotonically with $\mathbb{E}[Y \mid X]$, then the improvement on PU metrics does not necessarily implies the improvement on real metrics. However, if PU metrics are optimal, then real metrics are also optimal. Then we further weakened the assumption, such that $\mathbb{E}[S \mid X]$ is not required to grow completely monotonic with $\mathbb{E}[Y \mid X]$, a noise with level $\epsilon$ is allowed. In this case, the optimality of PU metrics does not imply the optimality of real metrics, but the gap is only second order small. Therefore, we claim that the PU metrics can, in general, provide us with correct direction to help us design and improve our model.

\section{Learning Methods}
In section 4, we have shown that after trying to improve these substitute metrics, the distance between our final classifier and the ideal classifier is only $O\left(\epsilon^{2}\right)$. As a result, we can just focus on improving these PU metrics. In this section, we discuss how to optimize these metrics given a real dataset.

To begin with, we assume that we have infinite amount of data. This implies that $\mathbb{E}[S \mid X=x]$ is known for every $x$. In this case, we show the following theorem
\begin{thm}\label{thm:roccurve}
The $T P R_{S P U}$ and $FPR_{S P U}$ of classifier
\begin{equation}
	g^{*}(X)=I(\mathbb{E}[S \mid X]>T)
\end{equation}
is on the optimal $R O C_{S P U}$ curve.	
\end{thm}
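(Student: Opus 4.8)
The plan is to recognize this as a Neyman--Pearson type optimality statement and prove it by a pointwise (exchange) argument. The first step is to rewrite both PU rates so that they depend on the classifier only through the conditional mean $m(X):=\mathbb{E}[S\mid X]$. Since any classifier $\hat{Y}$ is a measurable function of $X$, the tower property gives $\mathbb{E}[S\hat{Y}]=\mathbb{E}[\hat{Y}\,m(X)]$ and $\mathbb{E}[(1-S)\hat{Y}]=\mathbb{E}[\hat{Y}\,(1-m(X))]$, while the denominators $\mathbb{E}[S]$ and $\mathbb{E}[1-S]$ in \eqref{eq:tprdef}--\eqref{eq:fprdef} are fixed positive constants independent of $\hat{Y}$. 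Hence comparing two classifiers in terms of $TPR_{SPU}$ and $FPR_{SPU}$ is equivalent to comparing the unnormalized quantities $\mathbb{E}[\hat{Y}\,m(X)]$ and $\mathbb{E}[\hat{Y}\,(1-m(X))]$.

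Next I would restate what it means for $g^{*}$ to lie on the optimal $ROC_{SPU}$ curve: no classifier $\hat{Y}$ may have $FPR_{SPU}(\hat{Y})\le FPR_{SPU}(g^{*})$ together with $TPR_{SPU}(\hat{Y})>TPR_{SPU}(g^{*})$. To establish this I would use the elementary pointwise inequality
\[
  \bigl(g^{*}(X)-\hat{Y}(X)\bigr)\bigl(m(X)-T\bigr)\ge 0,
\]
valid for every $X$: on $\{m(X)>T\}$ we have $g^{*}=1\ge\hat{Y}$ and $m-T>0$, while on $\{m(X)\le T\}$ we have $g^{*}=0\le\hat{Y}$ and $m-T\le 0$, so in both cases the two factors share a sign. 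Taking expectations and using $\mathbb{E}[g^{*}-\hat{Y}]=\mathbb{E}[(g^{*}-\hat{Y})m]+\mathbb{E}[(g^{*}-\hat{Y})(1-m)]$ to eliminate the term $T\,\mathbb{E}[g^{*}-\hat{Y}]$, one rearranges into
\[
  (1-T)\bigl(\mathbb{E}[g^{*}m]-\mathbb{E}[\hat{Y}m]\bigr)\ge T\bigl(\mathbb{E}[g^{*}(1-m)]-\mathbb{E}[\hat{Y}(1-m)]\bigr).
\]

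The final step is to read off the conclusion. For a threshold $T\in[0,1)$ we have $1-T>0$ and $T\ge 0$, so whenever the right-hand side is nonnegative --- i.e.\ whenever $FPR_{SPU}(\hat{Y})\le FPR_{SPU}(g^{*})$, since dividing by $\mathbb{E}[1-S]>0$ preserves the inequality --- the left-hand side is nonnegative as well, giving $\mathbb{E}[\hat{Y}m]\le\mathbb{E}[g^{*}m]$ and hence $TPR_{SPU}(\hat{Y})\le TPR_{SPU}(g^{*})$. Thus $g^{*}$ dominates every classifier whose $FPR_{SPU}$ is no larger, which is exactly the statement that $\bigl(FPR_{SPU}(g^{*}),TPR_{SPU}(g^{*})\bigr)$ lies on the optimal $ROC_{SPU}$ curve; letting $T$ sweep over $[0,1)$ then traces out the whole curve.

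The main obstacle I anticipate is purely bookkeeping: carefully converting between the normalized rates and the unnormalized inner products $\mathbb{E}[\hat{Y}m]$, $\mathbb{E}[\hat{Y}(1-m)]$, and making sure the shared positive constants $\mathbb{E}[S]$ and $\mathbb{E}[1-S]$ never flip an inequality. A secondary technical point is atoms in the law of $m(X)$: if $\mathbb{P}(m(X)=T)>0$, the deterministic thresholding rule still lands on the curve, but reaching \emph{every} point of the curve may require randomizing $\hat{Y}$ on $\{m(X)=T\}$. Since the theorem only asserts that each $g^{*}$ is optimal, the deterministic exchange argument above suffices, with the degenerate endpoint $T=1$ (where $g^{*}\equiv 0$) handled separately.
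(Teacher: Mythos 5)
Your proof is correct, and it rests on the same engine as the paper's: the tower-property reduction $\mathbb{E}[S\hat Y]=\mathbb{E}[\hat Y\,\mathbb{E}[S\mid X]]$ followed by a pointwise sign (exchange) inequality for the thresholded conditional mean. Where you diverge is in how the comparison class is organized. The paper proves Theorem 5 by citing its Lemma 1 (inside the proof of Theorem 3), which compares $g^{*}$ only against classifiers with the \emph{same prediction rate} $\mathbb{P}(\hat Y=1)$: under that constraint the cross term $\mathbb{E}[\eta_S(X)(\hat Y-I(\eta_S(X)>T_S))]$ can be shifted by $T_S\,\mathbb{E}[\hat Y-I(\eta_S(X)>T_S)]=0$ and shown nonpositive, and since $\mathbb{E}[(1-S)\hat Y]=\mathbb{P}(\hat Y=1)-\mathbb{E}[S\hat Y]$ is then determined, maximizing $TPR_{SPU}$ and minimizing $FPR_{SPU}$ happen simultaneously; the step from ``optimal within each rate class'' to ``on the optimal $ROC_{SPU}$ curve'' is left implicit. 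You instead run a genuine Neyman--Pearson comparison against \emph{all} classifiers with $FPR_{SPU}(\hat Y)\le FPR_{SPU}(g^{*})$, eliminating the threshold term via the decomposition $\mathbb{E}[g^{*}-\hat Y]=\mathbb{E}[(g^{*}-\hat Y)m]+\mathbb{E}[(g^{*}-\hat Y)(1-m)]$ rather than via an equal-rate constraint, which directly verifies the defining property of the upper envelope (no classifier with no-larger $FPR_{SPU}$ has strictly larger $TPR_{SPU}$). Your version is thus more self-contained as a proof of the theorem as stated, at the modest cost of the $(1-T)$/$T$ bookkeeping and the separate treatment of the degenerate thresholds $T\notin[0,1)$; the paper's version is shorter and doubles as the workhorse for Theorem 3, which is why the paper states it as a lemma there.
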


This theorem tells us that if the sample size is infinite, such that $\mathbb{E}[S \mid X]$ can be accurately estimated, we can just classify a new sample by checking whether the conditional expectation of soft label is larger than a threshold. Of course, in reality, the amount of data is limited. In this case, we can design a method to approximate $g^{*}(X)$ as close as possible. With this intuition, we propose our method as following: Given a parametric model $g_{s}(x, w)$, e.g. a deep neural network, we minimize the following empirical loss:

\begin{equation}
		L_{e m p}(w)=  -\frac{1}{N} \sum_{i=1}^{N}\left[s_{i} \ln g_{s}\left(x_{i}, w\right)+\left(1-s_{i}\right) \ln \left(1-g_{s}\left(x_{i}, w\right)\right)\right]
\end{equation}

Let $w_{e m p}^{*}$ be the minimizer of $L_{e m p}(w)$. Now we briefly show that $g\left(x, w_{e m p}^{*}\right)$ will converge to the optimal classifier with the growth of sample size $N$ and the model complexity. Define
\begin{eqnarray}
		L(w) =  -\mathbb{E}\left[S \ln g_{s}(X, w)+(1-S) \ln \left(1-g_{s}(X, w)\right)\right]	
\end{eqnarray}
and
\begin{equation}
	w^{*}=\arg \min L(w).
\end{equation}

Then previous research on the convergence of neural network ~\citep{bruck1988generalized, kohler2019rate, kohler2021rate, muller2012neural} still hold here, and it can be shown that with the growth of the sample size $N$,

\begin{equation}
	\lim _{N \rightarrow \infty} \sup _{x}\left|g_{s}\left(x, w_{e m p}^{*}\right)-g_{s}\left(x, w^{*}\right)\right|=0 .
\end{equation}

Moreover, it can be easily shown from (31) that given the joint distribution of $S$ and $X, L(w)$ is minimized if $g(X, w)=\mathbb{E}[S \mid X]$. From universal approximation theorem, with the growth of complexity, $g_{s}\left(x, w^{*}\right)$ will gradually be closer to $\mathbb{E}[S \mid X]$. Combine this fact with (33), we know that the output score of the model trained by minimizing (30) converges to $\mathbb{E}[S \mid X]$. From Theorem 5, we know that the $T P R_{S P U}, F P R_{S P U}$ of the final classifier with thresholding will converge to a point on the optimal $R O C_{S P U}$ curve. Note that it is also possible to use nonparametric classifiers, such as $k$ nearest neighbor classifier \citep{liu2006new,zhao2021minimax,zhao2021efficient}. 

The analysis above shows that the classifier obtained by minimizing (30) converges to the optimal classifier under these PU metrics, including $T P R_{S P U}, F P R_{S P U}$ and $A U C_{S P U}$. In the next section, we will test the performance of our new method on some real datasets.

\section{Experiments}
In this section, we show experiments that validate our soft label PU learning method. Our experiments are conducted on three types of datasets. The first one comes from UCI repository. The second type includes two common image datasets, Fashion MNIST and CIFAR-10. For the first and the second type, since we do not assume the knowledge of class prior, we use $A U C$ instead of accuracy as the metric. Finally, we show the experiment result in anti-cheat task in Tencent Game, which is a realistic task.

\subsection{Results on UCI repository}
We use three datasets: Diabetes~\citep{strack2014impact}, Adult and Breast Cancer. These datasets are all fully labeled. However, in training datasets, we assume that the labels are not completely known, and only part of positive samples are labeled. The labeling mechanism is determined in the following way. To begin with, we generate a new feature in the training data, whose values follow a uniform distribution in $[0,0.5]$. Then for each training sample, the probability of being labeled equals the value of this new feature.

Our soft label PU learning method is designed as following. The first step is to identify some features from the dataset that are expected to be related to the probability of being positive. Note that these features are selected from common sense instead of being selected from data. Based on these features, we then generate soft labels.

For the Diabetes dataset, the task is to predict the patients that possibly suffer from diabetes. We expect that the people who visited hospital many times or have received many medical treatments are more likely to be positive than ordinary unlabeled samples. Therefore, we assign soft labels based on four features, i.e. number\_medications, number\_outpatient, number\_emergency and number\_inpatient. For the Adult dataset, we need to predict whether a person earns over $50 \mathrm{~K}$ a year. We expect that people who have some investments are more likely to earn over $50 \mathrm{~K}$, thus capital-gain and capital-loss are two features used to generate soft labels. For the breast cancer dataset, we use worst-radius and worsttexture for soft labels. Actually, in all these three datasets, there may exists better alternative methods to generate soft labels based on the opinions from domain experts.

We use two efficient models to train classifiers based on the generated soft labels, i.e. XGBoost ~\citep{chen2016xgboost} and LightGBM ~\citep{ke2017lightgbm} for classification. To avoid overfitting, we remove the features used to generate soft labels before feeding the training data into models.

To show the effectiveness of the soft label PU learning method, we design a contrast method for baseline, in which we just train a classifier using XGBoost and LightGBM with the labeled positive samples being regarded as positive samples, and those unlabeled as negative samples. Another difference with the soft label PU learning method is that in soft label method, the features used to generate soft labels are removed, while in the baseline method, these features are incorporated in model training.

The AUCs are then compared between the soft label PU learning method and the baseline. The results are shown as follows, in which the AUCs are evaluated using test data with true labels.

Table 1. AUC comparison between soft label PU learning method and baseline.

\begin{center}
	\begin{tabular}{lccc}
		\hline
		DATA SET & MODEL & RESULT & BASELINE \\
		\hline
		DIABETES & XGB & 0.721 & 0.694 \\
		DIABETES & LGBM & 0.728 & 0.700 \\
		ADULT & XGB & 0.834 & 0.829 \\
		ADULT & LGBM & 0.863 & 0.833 \\
		BREAST CANCER & XGB & 0.934 & 0.885 \\
		BREAST CANCER & LGBM & 0.910 & 0.876 \\
		\hline
	\end{tabular}
\end{center}

The result shows that for all cases, our soft label PU learning methods perform better than simple PU learning baselines, despite that the generation mechanism for soft labels are actually not very carefully designed. A simple explanation is that since the probability of a positive sample to be labeled
depends on the new feature we generated, the labeling mechanism does not satisfy SCAR assumption. In this case, there are some samples that are actually positive, but the labeling mechanism makes them hard to be discovered. With our new method, these hidden positive samples can possibly be assigned a soft label, thus their chances of being identified by classifiers can be greatly enhanced.

\subsection{Results on image datasets}
We then test the performance of soft label PU learning in two well known image datasets, i.e. Fashion-MNIST and CIFAR10. For these two datasets, we generate soft label under Generalized SCAR assumption, i.e. Assumption 2. To be more precise, for each sample with $Y_{i}=1$, we let

\begin{equation}
	\mathbb{P}\left(S_{i}=\frac{k}{4}\right)=\frac{1}{5}
\end{equation}

for $k=0, \ldots, 4$, which means that each real positive example is labeled with $0,0.25,0.5,0.75$ and 1 with equal probabilities. For each sample with $Y_{i}=0$, we let

\begin{equation}
	\mathbb{P}\left(S_{i}=\frac{k}{4}\right)=\frac{\pi(4-k)}{5 k(1-\pi)}
\end{equation}

for $k=0, \ldots, 4$, in which $\pi$ is the ratio of positive samples. It can be proved that by the construction (34) and (35), we have

\begin{equation}
	\mathbb{P}(Y=1 \mid S=s)=s
\end{equation}

for $s=0.25,0.5,0.75$. In this setting, the soft label has the meaning of the probability of a sample with unknown label to be positive.

With this setting, we use a simple convolutional neural network to classify images. Since both two datasets have 10 classes, we select some of classes as positive, while the others are regarded as negative samples. We design two baselines. The first baseline regards all samples with soft labels in $(0,1)$ as ordinary unlabeled samples, while the second baseline regards them as positive samples. The AUC results are shown in the following table, in which BLO and BL1 stand for the first and second baselines, respectively.

Table 2. AUC comparison between soft label PU learning method and baseline.

\begin{center}
	\begin{tabular}{ccccc}
		\hline
		Dataset & Positive class & Result & Baseline 1 & Baseline 2 \\
		\hline
		F-MNIST & 0 & 0.980 & 0.958 & 0.978 \\
		F-MNIST & $0,2,4,6$ & 0.994 & 0.983 & 0.993 \\
		CIFAR 10 & 0 & 0.930 & 0.898 & 0.900 \\
		CIFAR 10 & $0,1,8,9$ & 0.951 & 0.863 & 0.936 \\
		\hline
	\end{tabular}
\end{center}

It can be observed that the soft label PU learning shows the best performance for all cases.

\subsection{Results on anti-cheat services in Tencent Games}
Finally, we evaluate our method on the anti-cheat tasks in Tencent Games, which is a realistic application. 
Online video games suffer from all kinds of cheating including game-hacking scripts, fake adult credentials and etc, which hurt user experience severely and lead to a tremendous challenge in terms of legal risk prevention. We have already enacted multiple rules to identify these potential cheaters, and then let them to pass security check. After that, these users can be divided into three groups based on the results. The first group admit that they are cheaters, mainly because security checks are randomly assigned in terms of time and majority of cheaters are not capable to pass the sudden check alone. As a result, accounts in first group will suffer from punishment, including time restriction, account bans and etc. The second group of users pass the checking process successfully. They may indeed be innocent, but it is also possible that they have technical methods to pass the checking procedure. The third group of people skip or fail to pass the security check whose account will be forbidden from playing video games unless the security check is passed or expired.
Our goal is to train a classifier to identify users that are possibly cheating, based on the previous security check records and other in-game features. The first group can be regarded as reliable positive samples though the number of them is relatively small. For the rest of the groups, we regarded all of them as unlabeled samples, since there is no way for us to know whether they cheated or not for sure. If we conduct PU learning using traditional methods, then the classifiers trained by the data can only identify 'honest cheaters' instead of covering all possible cheaters.
To cope with this problem, we assign soft labels to unlabeled users, on the basis of data obtained from previous check. And we design two methods to calculated soft labels. Details are shown in Appendix \ref{sec:details}.
We then evaluate the performance of soft label PU learning online. To validate the effectiveness of soft labels, we compare the performance of our new method with a baseline method, in which all unlabeled data are simply regarded as negative samples without soft labels. We use the same three-layer neural network as the classifier in both our new method and the baseline method. After that, we ask users with predicted score higher than a certain threshold to pass security check. The performances are evaluated under two metrics: surrender ratio, defined as the ratio of the number of users who admit themselves as cheaters to the total number of security checks assigned, and the ratio of passing the check, defined as the ratio of the number of passed users to the total number of security checks assigned. A higher surrender ratio and lower ratio of successful pass the security check indicate that the method works better. The results are shown in Table 3.

Table 3. Results of soft label PU learning in anti-cheat task in Tencent Games.

\begin{center}
	\begin{tabular}{lccc}
		\hline
		Metric & Method1 & Method2 & Baseline \\
		\hline
		Surrender ratio & $2.66 \%$ & $2.20 \%$ & $1.84 \%$ \\
		Pass ratio & $19.7 \%$ & $24.3 \%$ & $53.3 \%$ \\
		\hline
	\end{tabular}
\end{center}
The result in Table 3 shows that under both metrics, with both methods for soft label generation, the soft label PU learning method is significantly better than the baseline.

\section{{Discussion}}
In this section, we mainly discuss two questions. To begin with, we discuss the potential application of the soft label PU learning method. Then we discuss a common alternative of soft label PU learning method, i.e. use soft label as features and still conduct normal PU learning.
\subsection{Potential Applications of our proposed soft label PU learning method}
The soft label PU learning method can potentially be used in broadly in many scenarios with only positive and unlabeled data. Generally speaking, if the number of observed positive samples is not enough, or the labeling of positive samples is obviously uneven, or the labeling negative samples is not absolutely accurate, even unlabeled, our method proposed can be applied. 
Specifically, in section 6, we have discussed the application in the cheater recognition task in Tencent Games. It can mitigate the unlabeled problem in ads recommendation when an ad exposed user disabled the ad tracking leading to a convention unknown observation. Also in medical diagnosis, the observed positive samples are patients that are already diagnosed to have a certain disease. Additionally when our method is used, people who have some related symptoms can be assigned with soft labels depending on their severity. For example, in the diagnosis of Alzheimer's disease ~\citep{jack2008alzheimer}, patients who suffer from mild cognitive impairment (MCI) may be assigned with soft labels.

\subsection{Comparison with using soft label as features}
It may be tempting to use soft label as a feature, and still conducts ordinary PU learning. If the observed positive samples are enough and the labeling is even, then using soft label as features may be better. However, in many applications, labeling is not even and available positive sample size may not be enough. In this case, even if the soft label can be used as a strong feature, the performance of the classifier trained by simple PU method may still not be satisfactory. This has been verified in section 6.1, which shows that after we assign soft labels based on some features that are related to the labels, the AUCs are improved compared with the baseline, which uses these features directly in model training.

\section{Conclusion}
In this paper, we have proposed soft label PU learning, in which apart from labeled positive and ordinary unlabeled data, we have some additional data that are also unlabeled, but are more likely to be positive. In this case, since $T P R$, $F P R, A U C$ are unknown, we have proposed some substitute metrics to evaluate the performance of a classifier. We have shown that under various different assumptions, these metrics can replace the real metrics well. Therefore, model or feature selection can be conducted with the objective of optimizing the PU metrics instead of real metrics. These analysis motivates us to propose a simple parametric learning method to optimize these PU substitute metrics. Finally, we have conducted experiments on both public datasets and task in the anti-cheat service of Tencent Game to validate our proposed method.

\newpage

\appendix

This appendix is organized as follows. In Appendix \ref{sec:details}, we show the experiment details of soft label generation in Tencent Games. The other sections provide proofs of theorems in the paper.

\section{Experiment details}\label{sec:details}

\subsection{Details of Soft Label Generation in anti-cheat Task of Tencent Game}
Figure 2 is a sketch of the procedure of the game-cheater recognition task. In
Tencent games, our goal is to find users who cheat the games in different ways
and let them pass a security check for verification. If the check is not passed or
is refused, various type of punishment will be given to this player account.
For this purpose, we collected some user data avoiding private information
including their gaming actions, complains from their teammates or opponents,
etc. Based on these data, we have enacted multiple rules, which can select users
that are potentially cheater. The users who satisfy at least one rule are then
required to pass the security check. After that, some users honestly admit or
are recognised by the checking procedure that they are actually cheaters. These
users’ accounts will suffer from permanent time restriction, and we use them as
positive samples. Some users reject or fail to verify themselves. Then a temporary
time restriction will be placed. For these users, we assign them soft labels. Other
users successfully passed the security check. They are labeled 0, indicating that
they are ordinary negative samples and in our cases, they will be regarded as
unlabeled data in case of unknown flaw of the checking procedure. Those players
can play games freely without any restrictions. Finally, most of users actually
do not satisfy any recognition rules or models. As a result, they are not required
to pass any check procedures, and they are also regarded as ordinary unlabeled
samples.
For users who reject or fail to verify themselves, soft labels are generated
using the following two methods.

\begin{figure}[h!]
	\begin{center}
		\includegraphics[width=0.5\linewidth]{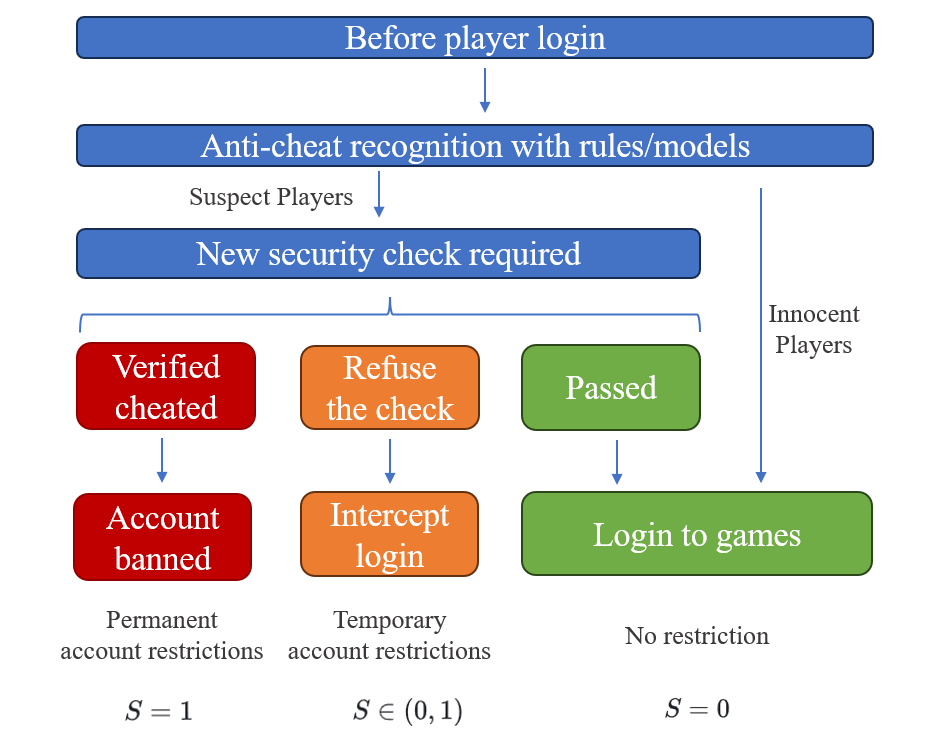}
		\caption{Procedure of game-cheater check and punishment.}
	\end{center}	
\end{figure}

\subsection{Soft label based on rules}

The first method is based on the ratio of users that reject or fail to pass the
security check among all users who are required to verify themselves. If such ratio
is high, then the rule is likely to be efficient, and the corresponding users are
more likely to be cheater. To be more precise, we make the following derivation,
in which $R$ denotes a specific rule, $R_0$ denotes a special rule which selects users
randomly, $F$ denotes the event of rejecting or failing to verify:
\begin{eqnarray}
	\mathbb{P}(Y=1|F, R)&=&\frac{\mathbb{P}(Y=1, F|R)}{\mathbb{P}(F|R)}\nonumber\\
	&=&\frac{\mathbb{P}(F|R)-\mathbb{P}(Y=0,F|R)}{\mathbb{P}(F|R)}\nonumber\\
	&\geq &1-\frac{\mathbb{P}(F|Y=0, R)}{\mathbb{P}(F|R)}\nonumber\\
	&\overset{(a)}{=} &1-\frac{\mathbb{P}(F|Y=0, R_0)}{\mathbb{P}(F|R)}\nonumber\\
	&\overset{(b)}{\geq} &1-\frac{\mathbb{P}(F|Y\sim \mathbb{B}{\pi})}{\mathbb{P}(F|R)}\nonumber\\
	&\approx&1-\frac{\text{Ratio of verification failure under random rule}}{\text{Ratio of verification failure under }R}
\end{eqnarray}

In (a), we assume that an innocent player rejects or fails to verify himself with
equal probabilities for different rules. As a result, the probability of verification
failure under each rule equals such probability under a random rule, that is,
$P(F | Y = 0,R) = P (F | Y = 0,R_0)$. In (b), B denotes Bernoulli distribution,
and $\pi = P(Y = 1)$ is the positive class prior. We assume that a true cheater are
more likely to fail to verify than innocent players.

For every user that rejects or fails security check, the soft label can be assigned as
\begin{eqnarray}
	S=1-\frac{\text{Ratio of verification failure under random rule}}{\text{Ratio of verification failure under }R},
\end{eqnarray}
then for each soft label $s$,
\begin{eqnarray}
	\text{P}(Y=1|S=s)\geq s.
\end{eqnarray}

If all soft labels are larger than $\pi$, then Assumption \ref{ass:basic} is satisfied. The proof is shown as follows. From Bayes formula,
\begin{eqnarray}
	\text{P}(S=s|Y=1)\geq \frac{(1-\pi)s}{\pi(1-s)}\text{P}(S=s|Y=0).
\end{eqnarray}
With the condition that $s\notin (0,\pi)$, we have
\begin{eqnarray}
	\mathbb{E}[S|Y=1]&=&\sum_s s\mathbb{P}(S=s|Y=1)\nonumber\\
	&>&\sum_s s\mathbb{P}(S=s|Y=0)\nonumber\\
	&=&\mathbb{E}[S|Y=0].
\end{eqnarray}
Therefore, Assumption \ref{ass:basic} is satisfied, and thus the soft label PU learning method can be used here.

\subsection{Soft label based on personal security check records}
For each user, we collect the security check record within a fixed time period, such
as 30 days. Then we let $n_i$ be the number of days such that this user is required
for security check, and $k_i$ be the number of days such that this user passed the
security check, $i = 1, . . . ,N$,$N$ is the number of all users. The intuition is that
if $n_i$ is large and $k_i$ is small, then the risk of this user to be cheated is high and
we need to assign this user a high soft label. The precise rule of soft label is:
\begin{eqnarray}
	S_i=1-\hat{\theta}_{Bayes, i},
\end{eqnarray}
in which $\hat{\theta}_{Bayes, i}$ is the Bayes estimation of the probability such that a user successfully verify himself as an innocent player, which is expressed as
\begin{eqnarray}
	\hat{\theta}_{Bayes, i} &=&\int \theta f(\theta|k_i) d\theta\nonumber\\
	&=& \frac{\int\theta f(\theta)\mathbb{P}(k_i|\theta)d\theta}{\int f(\theta)\mathbb{P}(k_i|\theta)d\theta}\nonumber\\
	&=&\frac{\int f(\theta)\theta^{k_i+1}(1-\theta)^{n_i-k_i}d\theta}{\int f(\theta) \theta^{k_i}(1-\theta)^{n_i-k_i}d\theta}.
\end{eqnarray}
In the last step, we simply assume that a user passes security check independently with same probabilit, thus $k_i$ follows Binomial distribution with parameter $(n_i, \theta)$.

Now it remains to determine the prior distribution $f(\theta)$. We estimate the prior distribution from data. In particular, we define the log likelihood as 
\begin{eqnarray}
	L(\theta) = -\frac{1}{N}\sum_{i=1}^N \ln \int \binom{n_i}{k_i}\theta^{k_i}(1-\theta)^{n_i-k_i}f(\theta)d\theta.
\end{eqnarray}
Now we get $f(\theta)$ by solving the following optimization problem.
\begin{eqnarray}
	\begin{array}{cc}
		\underset{f}{\text{minimize}} & L(\theta)+\lambda\int f^2(\theta) d\theta\\
		\text{subject to} &\int f(\theta)d\theta = 1, f(\theta)\geq 0.
	\end{array}
\end{eqnarray}
$\lambda\int f^2(\theta)d\theta$ is a regularization term that prevents overfitting. The optimization is constrained on a probability simplex, hence we can solve it using exponentiated gradient method \cite{beck2003mirror}.

\section{Proof of Theorem \ref{thm:auc}}\label{sec:auc}

For simplicity, we only provide the proof assuming that $S$ follows a continuous distribution. If the distribution of $S$ have some mass at some $s$, then we can construct a series of continuous distributions that converges to the true distribution of $S$. Let $f_{S}, F_{S}$ be the pdf and cdf of $S$, respectively.

From (3) and (4),
\begin{equation}
	T P R_{S P U}=\frac{\mathbb{P}(\hat{Y}=1)}{\mathbb{E}[S]}-\frac{\mathbb{E}[1-S]}{\mathbb{E}[S]} F P R_{S P U}
\end{equation}

therefore
\begin{eqnarray}
	A U C_{S P U}&= & \frac{1}{\mathbb{E}[S]} \int_{0}^{1} \mathbb{P}(\hat{Y}=1) d F P R_{S P U} \nonumber\\
	&& -\frac{\mathbb{E}[1-S]}{\mathbb{E}[S]} \int_{0}^{1} F P R_{S P U} d F P R_{S P U} \nonumber\\
	&= & \frac{1}{\mathbb{E}[S]}-\frac{\mathbb{E}[1-S]}{2 \mathbb{E}[S]}-\frac{1}{\mathbb{E}[S]} \int_{0}^{1} F P R_{S P U} d \mathbb{P}(\hat{Y}=1) \nonumber\\
	&= & \frac{1}{2}+\frac{1}{2 \mathbb{E}[S]}-\frac{1}{2 \mathbb{E}[S] \mathbb{E}[1-S]}\nonumber \\
	&& +\frac{1}{\mathbb{E}[S] \mathbb{E}[1-S]} \int_{0}^{1} \mathbb{E}[S \hat{Y}] d \mathbb{P}(\hat{Y}=1).
	\label{eq:aucexp}
\end{eqnarray}

Define $T$ implicitly, such that $\mathbb{P}(S>T)=\mathbb{P}(\hat{Y}=1)$, then
\begin{eqnarray}
	\mathbb{E}[S \hat{Y}]&= & \mathbb{E}[S \mathbb{P}(\hat{Y}=1 \mid S)] \nonumber\\
	&= & \mathbb{E}[S I(S>T)]+\mathbb{E}[S(\mathbb{P}(\hat{Y}=1 \mid S)-I(S>T))] \nonumber\\
	&\overset{(a)}{=} & \mathbb{E}[S I(S>T)]  +\mathbb{E}[(S-T)(\mathbb{P}(\hat{Y}=1 \mid S)-I(S>T))] \nonumber\\
	&\overset{(b)}{\leq} & \mathbb{E}[S I(S>T)].	
\end{eqnarray}
(a) holds because $\mathbb{E}[\mathbb{P}(\hat{Y}=1|S)-I(S>T)]=\mathbb{P}(\hat{Y}=1)-\mathbb{P}(S>T)=0$. (b) holds because $(S-T)(\mathbb{P}(\hat{Y}=1 \mid S)-I(S>T))\leq 0$ always holds.

Therefore
\begin{eqnarray}
	\int_{0}^{1} \mathbb{E}[S \hat{Y}] d \mathbb{P}(\hat{Y}=1) & \leq& \int_{0}^{1} \mathbb{E}[S I(S>T)] d \mathbb{P}(S>T) \nonumber\\
	& =&\frac{1}{2}-\frac{1}{2} \int_{0}^{1} F_{S}^{2}(u) du.
	\label{eq:esy}
\end{eqnarray}

Combine \eqref{eq:aucexp} and \eqref{eq:esy}, \eqref{eq:aucbound} can be easily derived. The proof is complete.

\section{Proof of Theorem \ref{thm:linear}}\label{sec:linear}

 For simplicity, we only prove \eqref{eq:tprspu} and \eqref{eq:aucspu}. The proof of \eqref{eq:fprspu} is omitted since it is very similar to the proof of \eqref{eq:tprspu}. From Assumption \ref{ass:mela}, we have $\mathbb{E}[S \mid Y, \hat{Y}]=\mathbb{E}[S \mid Y]$ since $\hat{Y}$ is a function of $X$. Therefore
\begin{eqnarray}
\mathbb{E}[S \hat{Y}] & =&\mathbb{E}[S Y \hat{Y}]+\mathbb{E}[S(1-Y) \hat{Y}] \nonumber\\
& =&\mathbb{E}[Y \hat{Y}] S_{P}+\mathbb{E}[(1-Y) \hat{Y}] S_{N} \nonumber\\
& =&\pi S_{P} T P R+(1-\pi) S_{N} F P R	
\end{eqnarray}
Moreover,
\begin{equation}
	\mathbb{E}[S]=\pi S_{P}+(1-\pi) S_{N},
\end{equation}
hence
\begin{equation}
	T P R_{S P U}=a T P R+b F P R.
\end{equation}

(14) is proved. (15) can be proved in a similar way, thus we omit it. For (16), we have

\begin{eqnarray}
		A U C_{S P U}&= & \int T P R_{S P U} d F P R_{S P U} \nonumber\\
		&= & a c \int T P R d T P R+b d \int F P R d F P R  +a d \int T P R d F P R+b c \int F P R d T P R \nonumber\\
		&= & \frac{1}{2} a c+\frac{1}{2} b d+b c+(a d-b c) A U C \nonumber\\
		&= & \frac{1}{2}(b+c)+(a d-b c) A U C,
\end{eqnarray}

in which the last step can be simply proved using the expressions from \eqref{eq:a} to \eqref{eq:d}. Furthermore, it is obvious that if $S_{P}>S_{N}$, then $a d-b c>0$.

\section{Proof of Theorem \ref{thm:mela}}\label{sec:mela}

 Let $\hat{Y}: \mathbb{R}^{d} \rightarrow\{0,1\}$ be a classifier, which is a function of $x$. Define $\eta_{S}(x)=\mathbb{E}[S \mid x]$. We first show the following lemma:
\begin{lem}\label{lem}
	Given $\mathbb{P}(\hat{Y}=1)$, if $\hat{Y}=I\left(\eta_{S}(X)>T_{S}\right)$, then $T P R_{S P U}$ is maximized, and $FPR_{S P U}$ is minimized, in which $T_{S}$ is selected such that $\mathbb{P}(\hat{Y}=1)=\mathbb{P}\left(S>T_{S}\right)$.
\end{lem}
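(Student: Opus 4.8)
The plan is to reduce both optimization claims to a single standard extremal problem. First I would observe that once $\mathbb{P}(\hat{Y}=1)$ is fixed, the denominators $\mathbb{E}[S]$ and $\mathbb{E}[1-S]$ in \eqref{eq:tprdef} and \eqref{eq:fprdef} are constants independent of the classifier. Moreover $\mathbb{E}[(1-S)\hat{Y}] = \mathbb{E}[\hat{Y}] - \mathbb{E}[S\hat{Y}] = \mathbb{P}(\hat{Y}=1) - \mathbb{E}[S\hat{Y}]$, and $\mathbb{P}(\hat{Y}=1)$ is held fixed. Hence maximizing $TPR_{SPU}$ and minimizing $FPR_{SPU}$ are both equivalent to maximizing the single quantity $\mathbb{E}[S\hat{Y}]$ over all classifiers $\hat{Y}$ with the prescribed positive-prediction rate. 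Note that this step uses only the definitions of the PU metrics, so the lemma needs none of Assumptions \ref{ass:scar}--\ref{ass:mela}.

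Next, since $\hat{Y}$ is a function of $X$, I would apply the tower property to write $\mathbb{E}[S\hat{Y}] = \mathbb{E}[\hat{Y}\,\mathbb{E}[S\mid X]] = \mathbb{E}[\hat{Y}\eta_S(X)]$. The problem becomes: maximize $\mathbb{E}[\hat{Y}\eta_S(X)]$ subject to $\mathbb{E}[\hat{Y}] = \mathbb{P}(\hat{Y}=1)$ with $\hat{Y}(X)\in\{0,1\}$. This is a Neyman--Pearson-type budget-allocation problem whose solution is to set $\hat{Y}=1$ precisely where the reward $\eta_S(X)$ is largest, i.e.\ to threshold $\eta_S$.

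To verify optimality of the thresholding rule $\hat{Y}^*=I(\eta_S(X)>T_S)$, I would compare it against any competitor $\hat{Y}$ with the same $\mathbb{P}(\hat{Y}=1)$, reusing the sign trick from the proof of Theorem~\ref{thm:auc}. Because $\mathbb{E}[\hat{Y}^*]=\mathbb{E}[\hat{Y}]$, subtracting the constant $T_S$ costs nothing: $\mathbb{E}[(\hat{Y}^*-\hat{Y})\eta_S(X)]=\mathbb{E}[(\hat{Y}^*-\hat{Y})(\eta_S(X)-T_S)]$. On $\{\eta_S>T_S\}$ one has $\hat{Y}^*-\hat{Y}=1-\hat{Y}\geq 0$ and $\eta_S-T_S>0$; on $\{\eta_S<T_S\}$ one has $\hat{Y}^*-\hat{Y}=-\hat{Y}\leq 0$ and $\eta_S-T_S<0$; in both regions the product is nonnegative, and it vanishes on $\{\eta_S=T_S\}$. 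Thus the integrand is pointwise nonnegative, giving $\mathbb{E}[\hat{Y}^*\eta_S]\geq\mathbb{E}[\hat{Y}\eta_S]$, which by the first paragraph yields the claimed maximization of $TPR_{SPU}$ and minimization of $FPR_{SPU}$.

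The main obstacle is not the sign argument but the existence of a threshold $T_S$ meeting the budget exactly, i.e.\ $\mathbb{P}(\eta_S(X)>T_S)=\mathbb{P}(\hat{Y}=1)$. When the law of $\eta_S(X)$ is continuous, such $T_S$ exists and $\{\eta_S=T_S\}$ has measure zero, so the comparison is clean; when $\eta_S(X)$ has an atom at the threshold, one must allow a randomized (or fractional) decision on that atom to hit the budget and then restrict the comparison to classifiers obeying the same constraint. I would present the continuous case in the body, consistent with the continuity convention already adopted in the proof of Theorem~\ref{thm:auc}, and remark that the atomic case requires only this standard randomization patch.
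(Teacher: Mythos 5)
Your proof is correct and follows essentially the same route as the paper's: reduce both claims to maximizing $\mathbb{E}[S\hat{Y}]$, apply the tower property to get $\mathbb{E}[\hat{Y}\eta_S(X)]$, and then show the threshold rule wins via the zero-cost subtraction of $T_S$ and a pointwise sign argument. In fact your write-up is more complete than the paper's, which asserts the key inequality without spelling out the sign trick, leaves the $FPR_{SPU}$ reduction implicit, and does not address existence of the threshold or atoms of $\eta_S(X)$.
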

\begin{proof}
	\begin{eqnarray}
\mathbb{E}[S \hat{Y}]&= & \mathbb{E}\left[\eta_{S}(X) \hat{Y}\right] \nonumber\\
&= & \mathbb{E}\left[\eta_{S}(X) I\left(\eta_{S}(X)>T_{S}\right)\right]  +\mathbb{E}\left[\eta_{S}(X)\left(\hat{Y}-I\left(\eta_{S}(X)>T_{S}\right)\right)\right] \nonumber\\
&\leq & \mathbb{E}\left[\eta(X) I\left(\eta_{S}(X)>T_{S}\right)\right].		
	\end{eqnarray}
In the last step, equality holds when $\hat{Y}=I\left(\eta_{S}(X)-T_{S}\right)$. From the definition of $T P R_{S P U}$ and $F P R_{S P U}$ in \eqref{eq:tprdef} and \eqref{eq:fprdef}, we know that with $\hat{Y}=I\left(\eta_{S}(X)-T_{S}\right), T P R_{S P U}$ is maximized, and $F P R_{S P U}$ is minimized.	
\end{proof}

Therefore, if $g_{S}^{*}$ attains optimal $R O C_{S P U}$, then $g_{S}^{*}$ should be monotonically increasing with $\eta_{S}(x)$. From Assumption \ref{ass:mela}, $g_{S}^{*}$ is also monotonically increasing with $\eta(\mathbf{x})$. Similar to Lemma \ref{lem}, it can be proved that $\hat{Y}=I\left(\eta(X)>T_{Y}\right)$ both maximizes $T P R$ and minimizes $F P R$, in which $\eta(X)=$ $\mathbb{E}[Y \mid X]$. Therefore, $g_{S}^{*}$ also attains optimal $R O C$.

\section{Proof of Theorem \ref{thm:nmela}}\label{sec:nmela}
From the proof of Theorem 3, we know that $g^{*}(x)=\eta(x)$ attains optimal $R O C$, while $g_{S}^{*}(x)=\eta_{S}^{*}(x)$ attains optimal $R O C_{S P U}$. For each threshold $T_{Y} \in[0,1]$, define $T P R\left(g, T_{Y}\right)$ as the $T P R$ of classifier $\hat{Y}=I\left(\eta(x)>T_{Y}\right)$, and select $T_{S}$ such that

\begin{equation}
	\int_{\eta_{S}(x) \geq T_{S}} f(x) d x=\int_{\eta(x) \geq T_{Y}} f(x) d x .
\end{equation}

Then define $T P R\left(g_{S}, T_{S}\right)$ as the $T P R$ value of classifier $\hat{Y}=I\left(\eta_{S}(x)>T_{S}\right)$. Then

\begin{equation}
	\begin{aligned}
		T P R\left(g_{S}, T_{S}\right) & =\frac{\mathbb{E}\left[Y I\left(g_{S}(X)>T_{S}\right)\right]}{\mathbb{E}[Y]} \\
		& =\frac{1}{\pi} \int_{g_{S}(x)>T_{S}} \eta(x) f(x) d x \\
		T P R\left(g, T_{Y}\right)= & \frac{1}{\pi} \int_{g(X)>T_{Y}} \eta(x) f(x) d x
	\end{aligned}
\end{equation}

Define

\begin{equation}
	\begin{aligned}
		A & :=\left\{x \mid \eta(x)>T_{Y}\right\} \\
		A_{S} & :=\left\{x \mid \eta_{S}(x)>T_{S}\right\}
	\end{aligned}
\end{equation}

Now we bound the difference of the $T P R$ of classifier $I\left(\eta(x)>T_{Y}\right)$ and $I\left(\eta_{S}(x)>T_{S}\right)$ :
\begin{eqnarray}
&& T P R\left(g, T_{Y}\right)-T P R\left(g_{S}, T_{S}\right) \nonumber\\
&= & \frac{1}{\pi}\left[\int_{A \setminus A_{S}} \eta(x) f(x) d x-\int_{A_{S} \setminus A} \eta(x) f(x) d x\right] \nonumber\\
&= & \frac{1}{\pi}\left[\int_{A \setminus A_{S}}\left(\eta(x)-T_{Y}\right) f(x) d x-\int_{A_{S} \setminus A}\left(T_{Y}-\eta(x)\right) f(x) d x\right] \nonumber\\
&\leq & \frac{1}{\pi C_{h}}\left[\int_{A \setminus A_{S}}\left(h(\eta(x))-h\left(T_{Y}\right)\right) f(x) d x-\int_{A_{S} \setminus A}\left(h\left(T_{Y}\right)-h(\eta(x))\right) f(x) d x\right] \nonumber\\
&= & \frac{1}{\pi C_{h}}\left[\int_{A \setminus A_{S}} h(\eta(x)) f(x) d x-\int_{A_{S} \setminus A} h(\eta(x)) f(x) d x\right] \nonumber\\
&\leq & \frac{1}{\pi C_{h}}\left[\int_{A \setminus A_{S}}\left(h(\eta(x))-\eta_{S}(x)\right) f(x) d x-\int_{A_{S} \setminus A}\left(h(\eta(x))-\eta_{S}(x)\right) f(x) d x\right] \nonumber\\
&\leq & \frac{2 \epsilon}{\pi C_{h}} \int_{A \setminus A_{S}} f(x) d x,	
\end{eqnarray}

in which the last step comes from Assumption 4. For $x \in$ $A \setminus A_{S}, \eta_{S}(x) \leq T_{S}$, then from Assumption 4,

\begin{equation}
	h(\eta(x)) \leq \eta_{S}(x)+\epsilon \leq T_{S}+\epsilon
\end{equation}

Moreover, for $x \in A_{S} \setminus A, \eta(x) \leq T_{Y}$, we have

\begin{equation}
	T_{S} \leq \eta_{S}(x) \leq h(\eta(x))+\epsilon \leq h\left(T_{Y}\right)+\epsilon
\end{equation}

thus $h(\eta(x)) \leq h\left(T_{Y}\right)+2 \epsilon$, and $\eta(x)-T_{Y} \leq 2 \epsilon / C_{h}$. Hence
\begin{eqnarray}
		 TPR\left(g, T_{Y}\right)-T P R\left(g_{S}, T_{S}\right) 
&\leq& \frac{2 \epsilon}{\pi C_{h}} \int_{\eta(x) \in\left[T_{Y}, T_{Y}+2 \epsilon / C_{h}\right]} f(x) d x \nonumber\\
&\leq & \frac{4 M}{\pi C_{h}^{2}} \epsilon^{2},
\label{eq:tprbound}	
\end{eqnarray}

in which the last step comes from \eqref{eq:slice}. Similarly, it can be proved that

\begin{equation}
	F P R\left(g_{S}, T_{S}\right)-F P R\left(g, T_{Y}\right) \leq \frac{4 M}{\pi C_{h}^{2}} \epsilon^{2}.
	\label{eq:fprbound}
\end{equation}

From \eqref{eq:tprbound} and \eqref{eq:fprbound}, it can be shown that

\begin{equation}
	A U C\left(g_{S}^{*}\right) \geq A U C\left(g^{*}\right)-\frac{8 M}{\pi C_{h}^{2}} \epsilon^{2} .
\end{equation}

\section{Proof of Theorem \ref{thm:roccurve}}\label{sec:roccurve}
Theorem \ref{thm:roccurve} has been proved in Lemma \ref{lem}.
\vskip 0.2in
\bibliographystyle{ACM-Reference-Format}
\bibliography{sample-base}

\begin{thebibliography}{25}
\providecommand{\natexlab}[1]{#1}
\providecommand{\url}[1]{\texttt{#1}}
\expandafter\ifx\csname urlstyle\endcsname\relax
  \providecommand{\doi}[1]{doi: #1}\else
  \providecommand{\doi}{doi: \begingroup \urlstyle{rm}\Url}\fi

\bibitem[Beck and Teboulle(2003)]{beck2003mirror}
Amir Beck and Marc Teboulle.
\newblock Mirror descent and nonlinear projected subgradient methods for convex
  optimization.
\newblock \emph{Operations Research Letters}, 31\penalty0 (3):\penalty0
  167--175, 2003.

\bibitem[Bekker and Davis(2018)]{bekker2018estimating}
Jessa Bekker and Jesse Davis.
\newblock Estimating the class prior in positive and unlabeled data through
  decision tree induction.
\newblock In \emph{Proceedings of the AAAI conference on artificial
  intelligence}, volume~32, 2018.

\bibitem[Bekker and Davis(2020)]{bekker2020learning}
Jessa Bekker and Jesse Davis.
\newblock Learning from positive and unlabeled data: A survey.
\newblock \emph{Machine Learning}, 109:\penalty0 719--760, 2020.

\bibitem[Bekker et~al.(2019)Bekker, Robberechts, and Davis]{bekker2019beyond}
Jessa Bekker, Pieter Robberechts, and Jesse Davis.
\newblock Beyond the selected completely at random assumption for learning from
  positive and unlabeled data.
\newblock In \emph{Joint European conference on machine learning and knowledge
  discovery in databases}, pages 71--85. Springer, 2019.

\bibitem[Bruck and Goodman(1988)]{bruck1988generalized}
Jehoshua Bruck and Joseph~W Goodman.
\newblock A generalized convergence theorem for neural networks.
\newblock \emph{IEEE Transactions on Information Theory}, 34\penalty0
  (5):\penalty0 1089--1092, 1988.

\bibitem[Chen et~al.(2020{\natexlab{a}})Chen, Liu, Wang, Zhao, and
  Wu]{chen2020variational}
Hui Chen, Fangqing Liu, Yin Wang, Liyue Zhao, and Hao Wu.
\newblock A variational approach for learning from positive and unlabeled data.
\newblock \emph{Advances in Neural Information Processing Systems},
  33:\penalty0 14844--14854, 2020{\natexlab{a}}.

\bibitem[Chen and Guestrin(2016)]{chen2016xgboost}
Tianqi Chen and Carlos Guestrin.
\newblock Xgboost: A scalable tree boosting system.
\newblock In \emph{Proceedings of the 22nd acm sigkdd international conference
  on knowledge discovery and data mining}, pages 785--794, 2016.

\bibitem[Chen et~al.(2020{\natexlab{b}})Chen, Chen, Chen, Yuan, Gong, Chen, and
  Wang]{chen2020self}
Xuxi Chen, Wuyang Chen, Tianlong Chen, Ye~Yuan, Chen Gong, Kewei Chen, and
  Zhangyang Wang.
\newblock Self-pu: Self boosted and calibrated positive-unlabeled training.
\newblock In \emph{International Conference on Machine Learning}, pages
  1510--1519. PMLR, 2020{\natexlab{b}}.

\bibitem[Christoffel et~al.(2016)Christoffel, Niu, and
  Sugiyama]{christoffel2016class}
Marthinus Christoffel, Gang Niu, and Masashi Sugiyama.
\newblock Class-prior estimation for learning from positive and unlabeled data.
\newblock In \emph{Asian Conference on Machine Learning}, pages 221--236. PMLR,
  2016.

\bibitem[Du~Plessis et~al.(2014)Du~Plessis, Niu, and Sugiyama]{du2014analysis}
Marthinus~C Du~Plessis, Gang Niu, and Masashi Sugiyama.
\newblock Analysis of learning from positive and unlabeled data.
\newblock \emph{Advances in neural information processing systems}, 27, 2014.

\bibitem[Du~Plessis and Sugiyama(2014)]{du2014class}
Marthinus~Christoffel Du~Plessis and Masashi Sugiyama.
\newblock Class prior estimation from positive and unlabeled data.
\newblock \emph{IEICE TRANSACTIONS on Information and Systems}, 97\penalty0
  (5):\penalty0 1358--1362, 2014.

\bibitem[Elkan and Noto(2008)]{elkan2008learning}
Charles Elkan and Keith Noto.
\newblock Learning classifiers from only positive and unlabeled data.
\newblock In \emph{Proceedings of the 14th ACM SIGKDD international conference
  on Knowledge discovery and data mining}, pages 213--220, 2008.

\bibitem[He et~al.(2018)He, Liu, Webb, and Tao]{he2018instance}
Fengxiang He, Tongliang Liu, Geoffrey~I Webb, and Dacheng Tao.
\newblock Instance-dependent pu learning by bayesian optimal relabeling.
\newblock \emph{arXiv preprint arXiv:1808.02180}, 2018.

\bibitem[Hou et~al.(2017)Hou, Chaib-Draa, Li, and Zhao]{hou2017generative}
Ming Hou, Brahim Chaib-Draa, Chao Li, and Qibin Zhao.
\newblock Generative adversarial positive-unlabelled learning.
\newblock \emph{arXiv preprint arXiv:1711.08054}, 2017.

\bibitem[Jack~Jr et~al.(2008)Jack~Jr, Bernstein, Fox, Thompson, Alexander,
  Harvey, Borowski, Britson, L.~Whitwell, Ward, et~al.]{jack2008alzheimer}
Clifford~R Jack~Jr, Matt~A Bernstein, Nick~C Fox, Paul Thompson, Gene
  Alexander, Danielle Harvey, Bret Borowski, Paula~J Britson, Jennifer
  L.~Whitwell, Chadwick Ward, et~al.
\newblock The alzheimer's disease neuroimaging initiative (adni): Mri methods.
\newblock \emph{Journal of Magnetic Resonance Imaging: An Official Journal of
  the International Society for Magnetic Resonance in Medicine}, 27\penalty0
  (4):\penalty0 685--691, 2008.

\bibitem[Ke et~al.(2017)Ke, Meng, Finley, Wang, Chen, Ma, Ye, and
  Liu]{ke2017lightgbm}
Guolin Ke, Qi~Meng, Thomas Finley, Taifeng Wang, Wei Chen, Weidong Ma, Qiwei
  Ye, and Tie-Yan Liu.
\newblock Lightgbm: A highly efficient gradient boosting decision tree.
\newblock \emph{Advances in neural information processing systems}, 30, 2017.

\bibitem[Kiryo et~al.(2017)Kiryo, Niu, Du~Plessis, and
  Sugiyama]{kiryo2017positive}
Ryuichi Kiryo, Gang Niu, Marthinus~C Du~Plessis, and Masashi Sugiyama.
\newblock Positive-unlabeled learning with non-negative risk estimator.
\newblock \emph{Advances in neural information processing systems}, 30, 2017.

\bibitem[Kohler and Langer(2019)]{kohler2019rate}
Michael Kohler and Sophie Langer.
\newblock On the rate of convergence of fully connected very deep neural
  network regression estimates.
\newblock \emph{arXiv preprint arXiv:1908.11133}, 2019.

\bibitem[Kohler and Langer(2021)]{kohler2021rate}
Michael Kohler and Sophie Langer.
\newblock On the rate of convergence of fully connected deep neural network
  regression estimates.
\newblock \emph{The Annals of Statistics}, 49\penalty0 (4):\penalty0
  2231--2249, 2021.

\bibitem[Liu et~al.(2006)Liu, Moore, Gray, and Cardie]{liu2006new}
Ting Liu, Andrew~W Moore, Alexander Gray, and Claire Cardie.
\newblock New algorithms for efficient high-dimensional nonparametric
  classification.
\newblock \emph{Journal of Machine Learning Research}, 7\penalty0 (6), 2006.

\bibitem[M{\"u}ller et~al.(2012)M{\"u}ller, Reinhardt, and
  Strickland]{muller2012neural}
Berndt M{\"u}ller, Joachim Reinhardt, and Michael~T Strickland.
\newblock \emph{Neural networks: an introduction}.
\newblock Springer Science \& Business Media, 2012.

\bibitem[Northcutt et~al.(2017)Northcutt, Wu, and
  Chuang]{northcutt2017learning}
Curtis~G Northcutt, Tailin Wu, and Isaac~L Chuang.
\newblock Learning with confident examples: Rank pruning for robust
  classification with noisy labels.
\newblock \emph{arXiv preprint arXiv:1705.01936}, 2017.

\bibitem[Strack et~al.(2014)Strack, DeShazo, Gennings, Olmo, Ventura, Cios,
  Clore, et~al.]{strack2014impact}
Beata Strack, Jonathan~P DeShazo, Chris Gennings, Juan~L Olmo, Sebastian
  Ventura, Krzysztof~J Cios, John~N Clore, et~al.
\newblock Impact of hba1c measurement on hospital readmission rates: analysis
  of 70,000 clinical database patient records.
\newblock \emph{BioMed research international}, 2014, 2014.

\bibitem[Zhao and Lai(2021{\natexlab{a}})]{zhao2021efficient}
Puning Zhao and Lifeng Lai.
\newblock Efficient classification with adaptive knn.
\newblock In \emph{Proceedings of the AAAI Conference on Artificial
  Intelligence}, volume~35, pages 11007--11014, 2021{\natexlab{a}}.

\bibitem[Zhao and Lai(2021{\natexlab{b}})]{zhao2021minimax}
Puning Zhao and Lifeng Lai.
\newblock Minimax rate optimal adaptive nearest neighbor classification and
  regression.
\newblock \emph{IEEE Transactions on Information Theory}, 67\penalty0
  (5):\penalty0 3155--3182, 2021{\natexlab{b}}.

\end{thebibliography}

\end{document}